\documentclass[twoside,11pt]{article}
\usepackage{jair, theapa, rawfonts}

\usepackage[capitalize,nameinlink,noabbrev]{cleveref}
\usepackage{makeidx}  % allows for indexgeneration
\usepackage{graphicx}
\usepackage{epsfig}
\usepackage{epstopdf}
\usepackage{amsfonts}
\usepackage{amssymb}
\usepackage{amsmath}
\usepackage{amsthm}
\usepackage{amsfonts}
\usepackage{tabularx}
\usepackage{cases}
\usepackage{mathrsfs}
\usepackage{etoolbox}
\usepackage{url}
\usepackage{bm}
\usepackage{multirow}
\usepackage{booktabs}
\usepackage[noend]{algpseudocode}
\usepackage{algorithm}
\usepackage{enumerate}
\usepackage{float}
\usepackage[caption=false]{subfig}
\usepackage{color}
\usepackage[title]{appendix}
\usepackage{listings}
%\captionsetup{compatibility=false}
%\usepackage[framed,numbered,autolinebreaks,useliterate]{mcode}

\newtheorem{theorem}{Theorem}[section]
\newtheorem{definition}{Definition}[section]

\newtheorem{proposition}{Proposition}[section]

\makeatletter

\newcommand{\Rmnum}[1]{\expandafter\@slowromancap\romannumeral #1@}
\makeatother

\jairheading{66}{2019}{473-502}{06/2018}{10/2019}
\ShortHeadings{MIRL for Certain General-Sum Stochastic Games}
{Lin, Adams \& Beling}
\firstpageno{473}

\begin{document}

\title{Multi-agent Inverse Reinforcement Learning for Certain General-Sum Stochastic Games}

\author{\name Xiaomin Lin \email xl5db@virginia.edu \\
        \addr Data Science, MassMutual Financial Group \\
        470 Atlantic Ave., Boston, MA 02210 USA \\
        \addr University of Virginia \\
       Charlottesville, VA 22904 USA \\
       \name Stephen C. Adams \email sca2c@virginia.edu \\
       \name Peter A. Beling \email pb3a@virginia.edu \\
       \addr University of Virginia \\
       Charlottesville, VA 22904 USA
       }
    %   \AND
    %   \name Philip Laird \email laird@ptolemy.arc.nasa.gov \\
    %   \addr NASA Ames Research Center,
    %   AI Research Branch, Mail Stop: 269-2,\\
    %   Moffett Field, CA  94035 USA}

% For research notes, remove the comment character in the line below.
% \researchnote

\maketitle

\begin{abstract}
%This paper attempts to address the problem of \emph{multi-agent inverse reinforcement learning} (MIRL) in a two-player \emph{general-sum} stochastic game framework. Three variants of MIRL, according to the strategy/equilibrium employed, are considered, where players: 1. fully cooperate to maximize the sum of players' game values, namely \emph{uCS-MIRL} for a cooperative game; 2. partially cooperate but follow a Correlated Equilibrium \emph{uCE-MIRL}, and; 3. partially cooperate but follow a Nash Equilibrium \emph{uNE-MIRL}. For uCS-MIRL, we propose a Bayesian approach, aiming to maximize the posterior of reward given the observed bi-policy. For the latter two problems, we first characterize the solutions ensuring the observed bi-policy constitutes a CE/NE and then propose some novel heuristics to pick a solution that not only minimize the total game value difference between the observed bi-policy and its \emph{local} uCS, but also maximize the scale of the solution. All these approaches solve the corresponding problems in a setting of efficiently solvable linear programming. We demonstrate our algorithms on two benchmark grid games.      
%\keywords{inverse reinforcement learning, multi-agent, game theory, general-sum, correlated equilibrium, Nash equilibrium, cooperative games, non-cooperative games}
This paper addresses a subset of problems of multi-agent inverse reinforcement learning (MIRL) in a two-player general-sum stochastic game framework. Specifically, five variants of MIRL are considered: \emph{uCS-MIRL}, \emph{advE-MIRL}, \emph{cooE-MIRL}, \emph{uCE-MIRL}, and \emph{uNE-MIRL}, each distinguished by its solution concept. Problem uCS-MIRL is a cooperative game in which the agents employ cooperative strategies that aim to maximize the total game value. In problem uCE-MIRL, agents are assumed to follow strategies that constitute a correlated equilibrium while maximizing total game value. Problem uNE-MIRL  is similar to uCE-MIRL in total game value maximization, but it is assumed that the agents are playing a Nash equilibrium. Problems advE-MIRL and cooE-MIRL assume agents are playing an adversarial equilibrium and a coordination equilibrium, respectively. We propose novel approaches to address these five problems under the assumption that the game observer either knows or is able to accurately estimate the policies and solution concepts for players. For uCS-MIRL, we first develop a characteristic set of solutions ensuring that the observed bi-policy is a uCS and then apply a Bayesian inverse learning method. For uCE-MIRL, we develop a linear programming problem subject to constraints that define necessary and sufficient conditions for the observed policies to be correlated equilibria. The objective is to choose a solution that not only minimizes the total game value difference between the observed bi-policy and a local uCS, but also maximizes the scale of the solution. We apply a similar treatment to the problem of uNE-MIRL. The remaining two problems can be solved efficiently by taking advantage of solution uniqueness and setting up a convex optimization problem. Results are validated on various benchmark grid-world games. 
\end{abstract}

\section{Introduction}\label{sec:intro}

This paper addresses the generalization of inverse reinforcement learning (IRL) to a multi-agent setting. IRL has been widely studied in recent years as part of a broad and growing interest in reinforcement learning (RL). The RL problem is commonly framed in terms of a single agent that aims to learn an optimal control policy for a Markov decision process (MDP) with reward function and state transition probabilities that are either known explicitly or that can be experienced though interaction with the environment. IRL, the inverse problem, has the objective of estimating the reward function given observations of a control policy followed by an agent \cite{Ng2000}. IRL has been applied to a number of problems, most related to the problems of learning from demonstrations and apprenticeship learning \cite{abbeel2004apprenticeship,Baker2009,Ziebart2008}. 

A key assumption in IRL is that a behavioral model for the agent is known. The most common behavior model is that the agent has acted optimally with respect to the MDP. Other behavioral assumptions can be adopted, however, such as a probabilistic selection of suboptimal behavior that attempts to model agents observed in the midst of their own learning process. IRL is inherently ill-defined, as more than one reward function can be consistent with an observed optimal policy (one way to see this is to note that any policy is optimal with respect to a reward function that is everywhere zero). The ill-defined nature of the problem typically is addressed by formulating an optimization objective that imposes additional structure or properties on the learned rewards. Two prominent examples of such problems are max-margin IRL \cite{abbeel2004apprenticeship}, in which the rewards are chosen so as to maximize the value function of the observed policy relative to all other policies, and Bayesian IRL \cite{ramachandran2007bayesian}, which aims to find maximum {\em a posteriori} estimate of rewards given priors and a likelihood model.  

RL and IRL can be extended to multi-agent settings. Multi-agent reinforcement learning (MRL) is defined in terms of a stochastic game, rather than an MDP as in the single-agent case (see, e.g., \citeR{Owen1968,Shapley1953}). In a stochastic game, the players are presented with a sequence of games with the joint actions taken in each game determining the transition probabilities to the next game. Hence, games play the role of states in an MDP. The rewards experienced by each player are determined by the payoff matrices of the games and the joint actions (or {\em bi-policy}) of the players. To define MRL as a computational problem, it is necessary to specify a solution concept, such as a Nash equilibrium or a correlated equilibrium that the players attempt to achieve. MRL algorithms have been developed for a number of equilibria with varying success in terms of theoretical and empirical performance \cite{Abdallah2008,Cigler2013,Conitzer2007,Greenwald2003,Hu1998,Littman1994,Littman2001,Sodomka2013}.  

The inverse learning problem for MRL, which we term multi-agent inverse reinforcement learning (MIRL), is to estimate the payoffs of a stochastic game given only observations of the actions taken by the players. Like its single-agent counterpart, the MIRL problem must be stated in terms of an assumed behavior model for the agents. Typically, the assumption is that the agents are playing equilibrium strategies. If one assumes that the agents reached the equilibrium by following an MRL algorithm, then it makes sense in MIRL to focus on developing solution methods for problems defined in terms of equilibria that have existing algorithms for the forward problem. 

MIRL can be viewed as a generalization of IRL in the sense that the later treats other agents in the system as part of the environment, ignoring the difference between decision-making agents and the passive environment. A financial trading example can be used to illustrate the difference. \citeA{Yang2015} use the reward functions inferred from IRL as a feature space for the purpose of classifying high-frequency trading algorithms in the stock market. This model is reasonable for a typical stock trading market. Usually there are many traders involved and their activities give rise to cancellation effects that make it reasonable for any one trader to model the collective actions of all the other traders as a stochastic system.  However, if the market is dominated by a small number of traders--such as is the case currently with crypto-currency trading--the market should not be modeled as a passive system. Rather, each dominant trader should take the other's possible strategies into account before making decisions. In such a case, a stochastic game framework (used in MIRL) would be more appropriate than a MDP framework (used in IRL).

MIRL has been studied far less than IRL, though there has been some relevant work on the topic in recent years. Natarajan et al. address multi-agent problems using an IRL model for multiple agents without dealing with interactions or interference among agents \cite{Natarajan2010}. \citeA{Waugh2011} contribute to the inverse equilibrium problem in the context of simultaneous one-stage games, rather than the sequential stochastic games that are the subject of MIRL. \citeA{Reddy2012} use the concept of a subgame perfect equilibrium (SPE) \cite{Maskin2001}, a refinement of the Nash equilibrium used in dynamic games, to address MIRL for general-sum stochastic games that have the property that each player's rewards do not depend on the actions of the others. \citeA{Hadfield-Menell2016} introduce a cooperative IRL problem, motivated from an autonomous system design problem, where the robot is required to align its value with those of the humans in its environment in such a way that its actions contribute to the maximization of values for the humans. Their problem is not modeled as a MIRL problem in a stochastic game context. \citeA{Lin2018} develop a Bayesian maximum {\em a posteriori} (MAP) estimation method. Their work is limited to two-person zero-sum stochastic games and is not applicable to arbitrary general-sum problems because the uniqueness property of the minimax equilibrium in zero-sum games does not carry over to solution concepts, such as the Nash equilibrium, that are important in general-sum games. In particular, the non-uniqueness of equilibria makes it unclear how to specify the likelihood function for a Bayesian inverse learning formulation. \citeA{Wang2018} propose an algorithm that takes sub-optimal demonstrations into account, rather than the optimal strategies assumed by \citeA{Lin2018}, and finds the reward function to minimize the margin between experts' performance and Nash equilibria-directed results.

This paper studies five two-person general-sum MIRL problems, \emph{uCS-MIRL}, \emph{advE-MIRL}, \emph{cooE-MIRL}, \emph{uCE-MIRL}, and \emph{uNE-MIRL}, each distinguished by a solution concept and corresponding class of equilibrium policies that the observed agents should be assumed to be playing. The first problem, uCS-MIRL, is a \emph{cooperative game} in which the agents employ cooperative strategies (CSs) that aim to maximize the sum of their value functions, or the \emph{total game value}. The second and third problems consider circumstances in which two special and unique Nash equilibria (NE) are employed: advE is in general a \emph{win-or-lose} equilibrium, but not necessarily for a zero-sum game; cooE is such an equilibrium that players maximize their own payoffs by coordinating with others. In the fourth problem, uCE-MIRL, the agents are assumed to follow strategies that constitute a \emph{utilitarian correlated equilibrium} (uCE), which achieves the maximum total game value among all CEs. In the last problem, uNE-MIRL, players are assumed to follow strategies that constitute a NE that maximizes total game value.

The equilibria that we study from an inverse perspective arise in a variety of application contexts. For example, uCS-MIRL, uCE-MIRL, and uNE-MIRL embed solution concepts that correspond to agents trying to achieve a socially efficient outcome (with or without certain constraints) that maximizes the sum of their value functions and is a Pareto optimum, meaning that it is not possible to make one player better off without making the other player worse off \cite{Barr2012}. These equilibria are of particular interest in welfare economics. For example, uCS applies to the situation where all players of a basketball team cooperate to maximize the total points rather than boosting any single player's performance. uCE is motivated from the social governance problem where policy makers are required to design rules to achieve Pareto optimality in social welfare without harming individual interests. Note that uCE has been studied from an MRL perspective \cite{Greenwald2003}. Despite the fact the advE and cooE-MIRL equilibria are not guaranteed to exist in every game, they have been studied in MRL \cite{Littman2001} and have potential applications. Consider an example in which two power suppliers compete with each other in a local market. Though to each supplier, this is obviously a win-or-lose game, it is usually not likely to evolve to a dominate-or-exit situation. Hence it might be more reasonable to assume the suppliers are playing an advE equilibrium rather than a minimax solution to a zero-sum game. As for cooE, the classic Stag Hunt game (see, e.g. \citeR{Skyrms2004}) is representative of a broad range of social cooperation games. In Stag Hunt there are two hunters, each can chose to hunt hare or stag, with symmetric payoffs. If they both hunt stag(hare), they both will get a payoff of 2(1); and if their targets are different, the one who hunts stag will fail to get anything and the other will get a payoff of 1. In this game, (stag, stag) is a cooE. 

In addition to their importance in applications, the five solution concepts that we consider each has been studied from the MRL (i.e., forward) perspective and computational algorithms exist that players of these games could, if they chose, follow to reach an equilibrium solution. Equilibrium uCS is actually an extension of RL to a multi-agent context in which the RL optimality concept is still valid. Equilibria cooE, advE and uCE have been well studied as a forward problem and corresponding Q-learning algorithms that guarantee convergence to these equilibria have been developed  \cite{Greenwald2003,Hu1998,Littman2001}. Equilibrium uNE has similar properties as uCE and is also computationally achievable. Hence, each of the five MIRL problems (\emph{uCS-MIRL}, \emph{advE-MIRL}, \emph{cooE-MIRL}, \emph{uCE-MIRL}, \emph{uNE-MIRL}) that we study is based on an equilibrium concept that might be followed by a pair of decision-makers playing a stochastic game. Given that these problems cover cooperative, semi-cooperative and fully competitive games, we argue that they represent a reasonable starting point for the study of MIRL in general-sum, stochastic games.

The principal contributions of this paper lie in framing and deriving solution methods for five MIRL problems.  For uCS/advE/cooE-MIRL, the key step is the development of a set of linear constraints on the reward function that are necessary and sufficient for the observed bi-policy to be a unique equilibrium of the assumed type. We then show that, by using these conditions in a Bayesian MAP-estimation formulation similar to that developed by \citeA{Lin2018} for zero-sum games, uCS/advE/cooE-MIRL can be solved as a convex optimization problem. An alternate approach, for which we do not provide details, would be to use the same necessary and sufficient conditions as the basis for a max-margin formulation similar to those seen in IRL \cite{Natarajan2010,Ng2000,Reddy2012}. 

Linear conditions specifying a unique equilibrium are not known for uCE/uNE.  This circumstance forces us to abandon the MAP estimation formulation used in uCS/advE/cooE-MIRL and adopt an equilibrium selection approach. For uCE-MIRL, we derive linear conditions on reward that are necessary and sufficient for the observed bi-policy to be a CE, a class of equilibria that includes the unique uCE. These conditions are used in a linear program with a novel objective function that incorporates both the scale of the solution and a metric on the total game value difference between the observed bi-policy and a notion of a local equilibrium. We apply a similar treatment to the uNE-MIRL problem. From a high-level perspective, the idea behind our uCE/uNE-MIRL algorithm completely departs from previous ones from which many IRL/MIRL algorithms stem, such as margin-maximization, posterior maximization and entropy maximization. The ideas developed in this paper--while limited to five problems and to two-player settings--could be extended to other uniquely existing solution concepts and to $n$-player stochastic games.

The remainder of this paper is structured as follows.  \cref{sec:preliminary_general_sum} introduces notation, terminology and definitions that will be used throughout this paper, as well as some basic game theory equilibrium concepts. \cref{sec:current_mirl} summarizes several conventional MIRL algorithms. \cref{sec:mirl_general_sum} provides the main technical work, developing different approaches for different problems to learn rewards. \cref{sec:experiments_general_sum} and \cref{sec:soccer} demonstrate our approaches through several benchmark experiments that include comparison with existing MIRL algorithms. Taking uNE-MIRL algorithm as an example, \cref{sec:incomplete_policy} explores the robustness of our algorithms given incomplete observations. \cref{sec:general_sum_conclusion} offers concluding remarks and a discussion of future work.

\section{Preliminaries} \label{sec:preliminary_general_sum}

This section serves to introduce concepts and notation for MRL that will be used throughout the paper. It also introduces relevant concepts and formalism for two-player general-sum games and the equilibria of interest in later sections.
 
\subsection{Stochastic Game}
A two-player general-sum discounted stochastic game is a tuple $\left \{ \mathcal{S}, \mathcal{A}_i, {R}_i, P, \gamma  \right \}$, where $\mathcal{S}$ is the common state space for all players, $\mathcal{A}_i$ and $R_i$ are the action space and reward for player $i$, respectively. $P$ is the probabilistic function controlling state transitions, conditioned on the past state and joint actions.  The reward discount factor is $\gamma \in \left [ 0, 1 \right )$. In this paper, we assume that both players share the same action space. The state and action spaces are both finite, i.e. $\left | \mathcal{S} \right |=N$ and $\left | \mathcal{A}_i \right |=M$. A stochastic game is a sequence of single-stage games, or \emph{subgames}, induced in every state $s\in \mathcal{S}$, such that both players need to determine an \emph{individual strategy} $\pi_i\left ( s \right )$ (in a non-cooperative game) or coordinate a \emph{bi-strategy} $\pi\left ( s \right )$ (in a cooperative game) that guides their actions in every subgame. The collection of all bi-strategies is a \emph{bi-policy} $\pi = \left ( \pi_1, \pi_2 \right )=\left ( \pi_1\left ( s \right ), \pi_2\left ( s \right ) \right ), \forall s \in S$. Note that an individual strategy can be a mixed strategy, which is a probability distribution over all available actions. We define a \emph{pure} bi-strategy $a \in \mathcal{A}=\mathcal{A}_1 \times \mathcal{A}_2$ as a bi-strategy where both players select deterministic actions.  Each player's reward values are assumed dependent on state and possibly, bi-strategies, but are independent of each other. 

\subsection{Multi-agent Problems in RL/IRL Context} \label{subsec:multi_RL_IRL}

%\textcolor{red}{In multi-agent scenario, we do not think the term ``MRL/MIRL" as unambiguous as ``RL/IRL". The reason is that RL/IRL has been strictly defined, in particular, no disagreement with what the concept of optimality, or solution concept is and what the objective is. However, for MRL/MIRL, we have so far seen different definitions of ``optimality" and different objectives (one single common value maximization, individual value maximization, etc.). While most people agree that the solution concept of a MRL/MIRL problem should be an equilibrium that is stable, there is no consensus on which equilibrium to choose. For example, in \cite{Littman1994,Hu2003,Natarajan2010}, authors select a general Nash equilibrium; in \cite{Greenwald2003}, five variants of correlated equilibrium are raised. Therefore, in our mind, MRL/MIRL is a class of problems as against to RL/IRL, which is a single problem. We claim that it is more appropriate to define each MRL/MIRL problem on a solution basis rather than proposing a single definition to cover all problems. One potential benefit is that it may be easier to find a solution by narrowing down the scope.}
 
We now introduce some basic terms, notations and fundamental equations that will be used throughout this paper. The derivations, developed by \citeA{Lin2018} are omitted here. $\tilde{r}_i^{\pi}\left ( s \right )$ denotes the \emph{expected reward value} received by agent $i$ at state $s$ under bi-policy $\pi$, specifically, 
\begin{equation}\label{R_average}
\begin{aligned}
\tilde{r}_i^{\pi}\left ( s \right ) &=  \sum_{a}\pi_1\left ( a_1 |s\right )\pi_2\left ( a_2|s \right )R_i\left ( s, a \right ) \\
&=  \left [ \pi_1\left ( s \right ) \right ]^TR_i\left ( s \right )\pi_2\left ( s \right ), \forall s \in \mathcal{S},
\end{aligned}
\end{equation}
where $a_i$ is one of player $i$'s available actions, and $a$ is a pure bi-strategy. $\pi_i\left ( s \right )$ is a $M \times 1$ vector denoting the probability distribution over actions in state $s$, each entry of which is the probability of taking action $a_i$ at state $s$, denoting $\pi_i\left (a_i| s \right )$. $R_i\left ( s \right )$ is a $M \times M$ matrix, each entry of which denotes a pure bi-strategy dependent reward value. Structuring all $R_i\left ( s, a \right )$ into a column vector as $r_i$, we can simplify and represent \eqref{R_average} in matrix notation as 
\begin{equation}\label{r_ave}
\tilde{r}_i^{\pi}=B_{\pi}r_i.
\end{equation}
The linear transformation operator $B_{\pi}$ is a $N \times NM^2$ matrix constructed from $\pi$, whose $k$th row is:
\begin{equation*}
\left [ \Phi^{\pi}_{1,1}\left ( k \right ), \Phi^{\pi}_{1,2}\left ( k \right ),\cdots ,\Phi^{\pi}_{M,M}\left ( k \right )   \right ],
\end{equation*}
where 
\begin{equation*}
\Phi^{\pi}_{i,j}\left ( k \right ) = \left [ \underbrace{0,\cdots ,0}_{k-1},\phi^{\pi}_{i,j}\left ( k \right ), \underbrace{0,\cdots ,0}_{N-k} \right ],
\end{equation*}
and 
\begin{equation*}
\phi^{\pi}_{i,j}\left ( k \right )=\pi^1\left ( i|k \right )\pi^2\left ( j|k \right ).
\end{equation*}
Player $i$'s \emph{value function}, starting at state $s$ and under $\pi$, is defined as  
\begin{equation}\label{V_orginial}
V_i^{\pi}\left ( s \right ) = \sum_{t=0}^{\infty }\gamma^tE\left ( \tilde{r}_i^{\pi}\left ( s_t \right ) |s_0=s \right ),
\end{equation}
and its \emph{Q}-function, upon $s$ and $a$, is 
\begin{equation}\label{Q_element}
Q_i^{\pi}\left ( s,a \right )=r_i\left ( s, a \right )+\gamma P_{s,a}V_i^{\pi},
\end{equation}
where $P_{s,a}$ is a row vector, each entry of which represents the transition probability from $s$ to all possible state given action $a$. Furthermore,
\begin{equation}\label{Q_equilibria}
V_i^{\pi}\left ( s \right ) \in \mbox{solution concept}_i  \left ( Q_1^{\pi}\left ( s  \right ), Q_2^{\pi}\left ( s  \right )  \right ), \forall s \in \mathcal{S}.
\end{equation}
Presumably, players agree on a solution concept and choose individual strategies accordingly. In order to pose an inverse learning problem, the solution concept must be known to the game observers estimating the reward function because it cannot be inferred or observed when provided the actions of the players.
\par
Let $G_{\pi}$ denote a $N \times N$ transition matrix under bi-policy $\pi$, with elements
%\begin{equation*}
%G_{{\pi}}\left ( i,j \right )=\sum_{a^1, a^2}\pi^1\left ( i, a^1 \right )\pi^2\left ( i, a^2 \right )p\left ( j|i, a^1, a^2 \right )
%\end{equation*}
\begin{equation}\label{G_def}
g_{\pi}\left ( s' | s \right )=\sum_{a}\pi_1\left ( a_1|s \right )\pi_2\left ( a_2|s \right )p\left ( s'|s, a \right ).
\end{equation}
%\end{definition}
%Note that $G_{{\pi}}$ is a transition matrix because
%\begin{equation*}
%\sum_{k=1}^{N}G_{{\pi}}\left ( s,s' \right )=\sum_{k=1}^{N}p\left ( s'|s \right )=1
%\end{equation*}
%\par
%\eqref{V_orginial} can be expended as
Then  
% \begin{equation}\label{V_expend}
% %\begin{aligned}
% V_i^{\pi}\left ( s \right )=\tilde{r}_i^{\pi}\left ( s \right ) + \gamma\sum_{s'}g_{\pi}\left ( s'|s \right )V_i^{\pi}\left ( s' \right ).
% %\end{aligned}
% \end{equation}
% In addition, $V_i^{\pi}\left ( s \right )$ can also be expressed in terms of the $Q$-function as
% \begin{equation}\label{shapley_formula}
% V_i^{\pi}\left ( s \right ) = \left [ \pi_1\left ( s \right ) \right ]^T Q_i^{\pi}\left ( s \right )\pi_2\left ( s \right ),
% \end{equation}
% where $Q_i^{\pi}\left ( s \right )$ is a $M \times M$ matrix. 
% We can rewrite \eqref{V_expend} in matrix notation as
% \begin{equation}\label{V_comp}
% V_i^{\pi}=\tilde{r}_i^{\pi}+\gamma G_{\pi}V_i^{\pi}.
% \end{equation}
% Thus
\begin{equation}\label{V_comp2}
V_i^{\pi}=\left ( I - \gamma G_{\pi} \right )^{-1}B_{\pi}r_i,
\end{equation}

\begin{equation}\label{Q_complicated}
\vec{Q}_i^{{\pi}} = \left ( I + \gamma P\left ( I - \gamma G_{{\pi}} \right )^{-1}B_{{\pi}} \right ){r}_i.
\end{equation}
$V_i^{\pi}$ can be rewritten more compactly as

% Restructuring $Q_i^{\pi}\left ( s,a \right )$ into a column vector, denoting $\vec{Q}_i^{\pi}$, we can rewrite equation \eqref{Q_element} in matrix notation, over all states and joint actions, as
% \begin{equation}\label{Q_all}
% \vec{Q}_i^{\pi}=r_i+\gamma P V_i^{\pi}, 
% \end{equation}
% where $P$ is a $NM^2 \times N$ matrix with $p\left ( s'|s, a \right )$ as its elements. Combining \eqref{Q_all} and \eqref{V_comp2} leads to 
% %Since $G_{{\pi}}$ is a \emph{transition matrix}, $G_{{\pi}}$ has all eigenvalues in the unit circle in the complex plane. Additionally, $0 < \gamma < 1$ implies that the matrix $V_{{\pi}}$ has all eigenvalues in the interior of the unit circle. This means that $\left ( I-\gamma G_{{\pi}} \right )$ has no zero eigenvalues, and is thus not singular.
\begin{equation}\label{V_Q_B}
V_i^{{\pi}}=B_{{\pi}}\vec{Q}_i^{{\pi}}.
\end{equation}
Lastly, we define the \emph{total game value} of a two-player stochastic game starting at state $s$, under a bi-policy $\pi$, $V^{\pi}\left ( s \right ) $, as the sum of the value functions of both players, i.e., $V^{\pi}\left ( s \right )=V_1^{\pi}\left ( s \right )+V_2^{\pi}\left ( s \right )$.

\par

\subsection{Non-cooperative Equilibrium} \label{subsec:non_cooperative_equlibrium}
In non-cooperative game theory, \emph{Nash equilibrium} (NE) and \emph{correlated equilibrium} (CE) are two of the most important solution concepts. A NE is an equilibrium where no player will benefit from unilaterally deviating from their current strategy given the other players' strategies remain unchanged \cite{Nash1950,Nash1951}. In a two-player single-stage game (state $s$), $\pi\left ( s \right )$ is a NE if and only if,
\begin{equation} \label{r_ne}
R_i\left (s, \pi\left ( s \right )\right )\geq R_i\left ( s, a_i, \pi_{-i}\left ( s \right ) \right ), a_i \in \mathcal{A}_i,
\end{equation}
where $\pi_{-i}\left ( s \right )$ denotes the other player's strategy at state $s$.
Unlike NE, in which all agents act \emph{independently} on a \emph{selfish} and \emph{conservative} basis, a CE is a probability distribution over the joint space of actions, in which all agents optimize their payoff with respect to one another's probabilities, conditioned on their own probabilities \cite{Greenwald2003}. Let $\Delta \left ( \mathcal{A} \right )$ denote the set of probability distributions over $\mathcal{A}$, and $X$ be a random variable taking values in $\mathcal{A}=\prod_{i \in \mathcal{I}}\mathcal{A}_i$ distributed according to $\pi \in \Delta \left ( \mathcal{A} \right )$. Then $\pi$ is a correlated equilibrium if and only if \cite{Ozdaglar2010} 
\begin{equation*}
\Sigma_{a_{-i} \in A_{-i}} P\left ( X=a|X_i=a_i \right )\left [ R_i\left ( s, a_i,a_{-i} \right )-R_i\left (s, \check{a}_i,a_{-i} \right ) \right ] \geq 0,
\end{equation*}
for all $a_i \in \mathcal{A}_i$ such that $P\left ( X_i=a_i \right ) > 0$ and all $\check{a}_i \in \mathcal{A}_i \setminus a_i$.

It has been proved that every finite game has at least one NE \cite{Nash1951,Owen1968}, as well as at least one CE \cite{Hart1989}. In fact, CE is a superset of NE \cite{Aumann1974}, and hence for any general sum game, the number of CEs is larger than or equal to the number of NEs. In regard to equilibrium search, finding a NE or determining the number of them is a NP-hard problem \cite{Daskalakis2009}, while finding CEs can be done in polynomial time via linear programming \cite{Papadimitriou2008}. Nevertheless, the non-uniqueness property causes the non-convergence issue and is a bottleneck for MRL/MIRL research based on these two equilibria.

\subsection{Cooperative Strategy}
\label{subsec:cs_general_sum}
Both NE and CE  are equilibria of competitive games. In a cooperative game, by contrast, an agreement on a joint strategy for all players can be called a \emph{cooperative strategy} (CS). A \emph{characteristic function} $v$ defines the type of cooperation between players \cite{Ferguson2008}, and for a two-player single-stage game (state $s$), can be defined as 
\begin{equation} \label{eq:cs_val}
v\left ( s, a \right )=\mbox{Val}\left (R_1\left ( s, a \right ), R_2\left ( s, a \right )\right ), a \in \mathcal{A} = \mathcal{A}_1 \times \mathcal{A}_2.
\end{equation}
$\mbox{Val}\left ( \cdot \right )$ is self-defined, based on the type of cooperation. 
\section{Conventional MIRL Approaches} \label{sec:current_mirl}
Before introducing our approaches, we review several existing approaches to MIRL and related problems. The first of these is a \emph{decentralized} MIRL (d-MIRL) approach developed by \citeA{Reddy2012}. This approach is decentralized in the sense that it infers agents' rewards one by one, rather than all at once. All agents are assumed to follow a Nash equilibrium at every single game. The key idea is to find a reward that maximize the difference between the $Q$ value of the observed policy and those of pure strategies, which is analogous to the classical approach to single-agent IRL proposed by \citeA{Ng2000}. Though in the original version of their approach reward is assumed dependent only on the state, we can extend it to treat action dependency as well. Using our notation and player 1 as an example, the d-MIRL approach to a two-person general-sum MIRL problem is to solve the following linear program: 
\begin{equation*} 
%\begin{aligned}
%\textrm{maximize:}  \quad
%& \sum_{s=1}^{N}\mbox{min }_{a_1} \left ( \tilde{r}_1^{\pi}\left ( s \right )-\tilde{r}_1^{\pi|a_1}\left ( s \right ) \right ) \\
%& +\gamma \left ( G_{\pi}\left ( s \right )- G_{\pi|a_1}\left ( s \right ) \right )\left ( I-\gamma G_{\pi}\right )^{-1}B_{\pi}r_1   \\
%& -\lambda \left \| r_1 \right \|_1 \\
%\textrm{subject to:} \quad
%&\left ( B_{{\pi}|a_1}-B_{{\pi}} \right )D_{{\pi}}{r}_1\leq {0},\\
%\end{aligned}
\begin{aligned}
\textrm{maximize:}  \quad
& \sum_{s=1}^{N}\mbox{min }_{a_1} \left ( B_{{\pi}} - B_{{\pi}|a_1} \right )D_{{\pi}}{r}_1 -\lambda \left \| r_1 \right \|_1 \\
\textrm{subject to:} \quad
&\left ( B_{{\pi}|a_1}-B_{{\pi}} \right )D_{{\pi}}{r}_1\leq {0},\\
\end{aligned}
\end{equation*}
where $D_{\pi}$ is defined in \eqref{eq:uCS} in the following section, $\lambda$ is an adjustable penalty coefficient for having too many non-zero values in the reward vector. 

The key idea of the second approach is to model a two-person general-sum MIRL as an IRL problem. This approach requires us to select one player (e.g. player 1) and treat the other player as part of the passive environment. A Bayesian IRL (BIRL) algorithm developed by \citeA{Lin2018} extends \citeS{Qiao2011} work by considering action-dependent reward cases in addition to state-dependent reward cases. Note that the reward of player 1 to be recovered is $R_1\left ( s,a_1 \right )$ instead of $R_1\left ( s,a_1, a_2 \right )$, as player 2 is not considered adaptive. That is to say, $R_1\left ( s,a_1, a_2 \right )=R_1\left ( s,a_1 \right )$ for all $a_2 \in \mathcal{A}_2$. Using our notation, the approach to recover player 1's reward is: 
\begin{equation}\label{IRL_complex_program}
\begin{aligned}
\textrm{minimize:}  \quad
& \frac{1}{2}\left ( {r_1-{\mu_{r_1}}} \right )^T \Sigma^{-1}_{r_1} \left ( {r_1-{\mu_{r_1}}} \right )   \\
\textrm{subject to:} \quad
& \left ( F^{\pi_1}_{a_1} - C_{a_1}  \right ) r_1 \geqslant 0,
%&{r}_{\min} \leq {r} \leq {r}_{\max}
\end{aligned}
\end{equation}
for all $a_1 \in\mathcal{A}_1$, where
\begin{equation*}
F^{\pi_1}_{a_1} = \left [ \gamma\left ( G_{\pi} - G_{\pi_1|a_1} \right )\left ( I - \gamma G_{\pi}\right )^{-1} + I  \right ]C_{\pi_1},
\end{equation*}
and $C_{\pi_1}$ is a $N \times NM$ sparse matrix constructed from $\pi_1$, whose $i$th row is,
\begin{equation*}
\left [ \underbrace{0, \cdots, \pi^1\left ( i, 1 \right ), \cdots ,0}_{N},\underbrace{\cdots}_{\left ( M-2 \right )N} , \underbrace{0, \cdots,\pi^1\left ( i, M \right ), \cdots ,0}_{N}\right ],
\end{equation*}
and $C_{a_1}$ is conceptually similar to $C_{\pi_1}$, except for being constructed from a pure strategy $a_1$ for all states. 

Strictly speaking, the BIRL approach is not a dedicated algorithm for MIRL problems but rather a way of shoehorning the multi-agent problem into a single-agent IRL setting. BIRL will provide a useful point of comparison to quantify the benefits of explicitly modeling the decisions of all players. 

The third approach is not applicable to a general-sum MIRL problem but a restricted family: zero-sum games. It is
\begin{equation}\label{eq:zerosum_program}
\begin{aligned}
\textrm{minimize:}  \quad
& \frac{1}{2}\left ( {r-{\mu_r}} \right )^T \Sigma^{-1}_r \left ( {r-{\mu_r}} \right )   \\
\textrm{subject to:} \quad
&\left ( B_{\pi|a_1}-B_{{\pi}} \right )D_{{\pi}}{r}\leq {0}\\
\quad
&\left ( B_{\pi|a_2}-B_{{\pi}} \right )D_{{\pi}}{r}\geq {0},\\
%\quad
%&{r}_{\min} \leq {r} \leq {r}_{\max} 
\end{aligned}
\end{equation}
for all $a_1\in\mathcal{A}_1$ and $a_2\in\mathcal{A}_2$. \citeA{Lin2018} provide more details.

These three approaches will be revisited as benchmarks in later sections. 

\section{MIRL Model Development} \label{sec:mirl_general_sum}
This section proposes five two-player general-sum MIRL problems and corresponding approaches to them. We first informally define an MIRL problem. Given a bipolicy $\pi$ being played in a two-player, general-sum game with states, actions, dynamics, and discount $\left \{ \mathcal{S}, \mathcal{A}_i, P, \gamma\right \}$, the MIRL problem is to find rewards $r_1$, $r_2$ that best explain the observed policy. Though we will not do so in this paper, MIRL may defined in terms of an set of observed state-action values  $\mathcal{O}$ rather a bipolicy.  
\par
The MRL literature suggests that an agreement over a specific solution concept may be needed to solve a MRL problem. Similarly, in our approaches to MIRL, one basic assumption is required: both players agree on a specific strategy/equilibrium to play and this information is available to the coordinator in posing the MIRL problem. We limit attention to the following five solution concepts: 
\begin{enumerate}
\item \textbf{\emph{utilitarian} Cooperative Strategy (uCS)}. In \eqref{eq:cs_val}, consider $\mbox{Val}\left ( \cdot \right )=\sum \left ( \cdot \right )$. A single-stage game in state $s$ and taking action $a$ is a \emph{utilitarian} cooperative strategy (uCS) if and only if 
\begin{equation}
\sum_{i}R_i\left ( s, a \right ) \geq \sum_{i}R_i\left ( s, a' \right ), a' \in \mathcal{A}= \mathcal{A}_1 \times \mathcal{A}_2 \setminus a.
\end{equation}
\item \textbf{Adversarial Equilibrium (advE)} An advE is a type of NE. It has another feature that no player is hurt by any change of others \cite{Hu1998,Littman2001,piotrowski2010reinforced}. That is to say, in a two-player single-stage game (state $s$), $\pi\left ( s \right )$ is an advE if and only if, in addition to \eqref{r_ne},
\begin{equation} \label{r_adve}
R_i\left (s, \pi_i\left ( s \right ), \pi_{-i}\left ( s \right ) \right )\leq R_i\left ( s, \pi_i\left ( s \right ), a_{-i} \right ), a_{-i} \in \mathcal{A}_{-i},
\end{equation}
\item \textbf{Coordination Equilibrium (cooE)}. A cooE is also a type of NE. It has another feature that all players' maximum expected payoffs are achieved \cite{Hu1998,Littman2001,piotrowski2010reinforced}. Mathematically, in a two-player single-stage game (state $s$), $\pi\left ( s \right )$ is a cooE if and only if, in addition to \eqref{r_ne},
\begin{equation} \label{r_cooe}
R_i\left (s, \pi\left ( s \right ) \right )\geq R_i\left ( s, a \right ), a \in \mathcal{A}= \mathcal{A}_1 \times \mathcal{A}_2.
\end{equation}
\item \textbf{\emph{utilitarian} Correlated Equilibrium (uCE)}. We borrow the concept of \emph{utilitarian} correlated equilibrium (uCE) defined by \citeA{Greenwald2003} and state that in a two-player single-stage game (state $s$), $\pi\left ( s \right )$ is a uCE if and only if, 
\begin{equation}
\Sigma_i R_i\left (s, \pi\left ( s \right )\right ) \geq \Sigma_i R_i\left (s, \check{\pi}\left ( s \right )\right ), \check{\pi}\left ( s \right ) \in \pi_{\text{CE}} \setminus \pi\left ( s \right ).
\end{equation} 
\item \textbf{\emph{utilitarian} Nash Equilibrium (uNE)}. Similar to uCE, in a two-player single-stage game (state $s$), a NE $\pi\left ( s \right )$ is a \emph{utilitarian} Nash Equilibrium (uNE) if and only if 
\begin{equation}
\Sigma_i R_i\left (s, \pi\left ( s \right )\right ) \geq \Sigma_i R_i\left (s, \check{\pi}\left ( s \right )\right ), \check{\pi}\left ( s \right ) \in \pi_{\text{NE}} \setminus \pi\left ( s \right ).
\end{equation}
\end{enumerate}
Among the above five equilibria, it is easy to show that uCS, uCE and uNE always exist and are unique in any games (uCS for cooperative while uCE and uNE for noncooperative). Both advE and cooE have been shown to be unique in a noncooperative game, though neither of them is guaranteed to exist \cite{Hu1998,Littman2001} in any games. 

The distinctions between cooE and uCS are worth noting. Intuitively, cooE is a noncooperative game equilibrium, which means that agents are essentially selfish. They are \emph{forced} to cooperate in order to maximize their individual benefits. When following a uCS, by contrast, agents cooperate with each other \emph{actively} and may even sacrifice their own benefits to achieve a better overall outcome. \cref{sec:experiments_general_sum} will help illustrate the differences.

\subsection{Extension to Stochastic Games} 
\citeA{Filar1996} show how the $Q$ function links a stochastic game to a single stage game. $Q$ functions at one particular state with different bi-strategy are treated as payoffs for that particular single stage game (note the terms ``game" and ``state" can be used interchangeably), and the stochastic game is said to be \emph{in an equilibrium} if and only if all single games (over all states) are in equilibrium. We now extend our definitions of the five strategies/equilibria from a single game to a two-player stochastic game, as follows,
\begin{definition}
A bi-policy $\pi$ is a uCS/advE/cooE/uNE/uCE of a two-player stochastic game $\mathcal{G}$ if only if $\pi\left ( s \right )$ is a uCS/advE/CooE/uNE/uCE of its sub-game $\mathcal{G}\left ( s \right )$, for all $s \in \mathcal{S}$.
\end{definition}
\par
Correspondingly, we define that a uCS/advE/cooE/uNE/uCE-MIRL problem is an MIRL problem in which the players are assumed to employ a uCS/advE/CooE/uNE/uCE. 

\subsection{uCS-MIRL}
A main result characterizing the set of solutions to a two-player uCS-MIRL problem is the following:
\begin{theorem}
Given a two-player stochastic game $\left \{ \mathcal{S}, \mathcal{A}_i, r_i, P, \gamma  \right \}$, an observed bi-policy $\pi$ is a uCS if and only if 
\begin{equation}\label{eq:uCS} 
\left (B_{\pi}-B_{a}  \right )D_{\pi}\left ( r_1+r_2 \right ) \geq 0,a \in \mathcal{A}=\mathcal{A}_1\times \mathcal{A}_2 
\end{equation}
where $D_{{\pi}}=I + \gamma P\left ( I - \gamma G_{{\pi}} \right )^{-1}B_{{\pi}}$. $B_{a}$ is obtained from such a bi-policy that players employ the bi-strategy $a$ in all states.
\end{theorem}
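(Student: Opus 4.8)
The plan is to turn the single vector inequality \eqref{eq:uCS} into one scalar inequality per (state, pure bi-strategy) pair, to identify each such scalar as a ``total value minus total $Q$'' gap at that state, and then to match these gaps against the definition of a stochastic-game uCS state by state.

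First I would collect the identities already available from \cref{subsec:multi_RL_IRL}. By \eqref{Q_complicated} we have $D_\pi r_i=\vec Q_i^{\pi}$, and by \eqref{V_Q_B} we have $B_\pi\vec Q_i^{\pi}=V_i^{\pi}$; hence the $s$-th component of $B_\pi D_\pi(r_1+r_2)$ is $V_1^{\pi}(s)+V_2^{\pi}(s)=V^{\pi}(s)$. I would then note that $B_a$ is constructed exactly like $B_\pi$ but from the degenerate bi-policy that plays the pure bi-strategy $a$ in every state, so applying $B_a$ to any vector of state--action quantities simply reads off the entries belonging to $a$; in particular the $s$-th component of $B_a D_\pi(r_1+r_2)=B_a(\vec Q_1^{\pi}+\vec Q_2^{\pi})$ is $Q_1^{\pi}(s,a)+Q_2^{\pi}(s,a)$. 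Combining,
\[
\bigl[(B_\pi-B_a)D_\pi(r_1+r_2)\bigr]_s \;=\; V^{\pi}(s)-\bigl(Q_1^{\pi}(s,a)+Q_2^{\pi}(s,a)\bigr),
\]
so \eqref{eq:uCS} is equivalent to: for every state $s$, $V^{\pi}(s)\ge Q_1^{\pi}(s,a)+Q_2^{\pi}(s,a)$ for all $a\in\mathcal A$. (Here $(I-\gamma G_\pi)^{-1}$ inside $D_\pi$ is well defined because $\gamma<1$ and $G_\pi$ is stochastic, as already used in \eqref{V_comp2}.)

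Second, I would invoke \citeA{Filar1996}'s reduction together with the definition of a stochastic-game uCS given above: $\pi$ is a uCS of $\mathcal G$ iff, for every $s$, the bi-strategy $\pi(s)$ is a uCS of the one-shot game $\mathcal G(s)$ whose payoff matrices are $Q_1^{\pi}(s,\cdot)$ and $Q_2^{\pi}(s,\cdot)$, i.e.\ iff $\pi(s)$ realizes $\max_{a\in\mathcal A}\bigl(Q_1^{\pi}(s,a)+Q_2^{\pi}(s,a)\bigr)$. The ($\Rightarrow$) direction is then immediate: if $\pi(s)$ realizes that maximum then $V^{\pi}(s)$ equals it and therefore dominates every $Q_1^{\pi}(s,a)+Q_2^{\pi}(s,a)$, which is exactly \eqref{eq:uCS}. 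For ($\Leftarrow$) I would use the fact that $V^{\pi}(s)=\sum_a\pi_1(a_1|s)\pi_2(a_2|s)\bigl(Q_1^{\pi}(s,a)+Q_2^{\pi}(s,a)\bigr)$ is a convex combination of the numbers $Q_1^{\pi}(s,a)+Q_2^{\pi}(s,a)$: a convex combination that is $\ge$ each of its own terms must equal their maximum (and must place zero weight on any non-maximal term), so $\pi(s)$ does realize the maximum and is a uCS of $\mathcal G(s)$; since this holds for all $s$, $\pi$ is a uCS of $\mathcal G$.

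The one point I would treat with care, and the closest thing to an obstacle, is reconciling the single-stage definition of a uCS (stated for a \emph{pure} bi-strategy $a$) with an observed $\pi(s)$ that may be mixed: the clean reading is that ``$\pi(s)$ is a uCS of $\mathcal G(s)$'' means ``$\pi(s)$ attains the cooperative, total-payoff-maximizing value of $\mathcal G(s)$,'' and the convex-combination argument above is precisely what makes this reading equivalent to \eqref{eq:uCS}. Everything else is bookkeeping with $B_\pi$, $B_a$ and $D_\pi$ already set up in \cref{sec:preliminary_general_sum}.
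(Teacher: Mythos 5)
Your proof is correct and follows essentially the same route as the paper's: both reduce \eqref{eq:uCS} componentwise to $V^{\pi}(s)\geq Q_1^{\pi}(s,a)+Q_2^{\pi}(s,a)$ for all $s$ and $a$ via the identities $D_{\pi}r_i=\vec{Q}_i^{\pi}$ and $B_{\pi}\vec{Q}_i^{\pi}=V_i^{\pi}$, and then match this against the definition of uCS state by state. The only difference is that you explicitly justify the sufficiency direction for a possibly mixed $\pi(s)$ with the convex-combination argument, a step the paper's chain of equivalences asserts without comment; this is a welcome clarification but not a different approach.
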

\begin{proof}
According to the definition of uCS, $\pi$ is a uCS if and only if, for any state $s$ and pure bi-strategy $a \in \mathcal{A}=\mathcal{A}_1\times \mathcal{A}_2$, we have
\begin{equation}
\begin{aligned}
&\pi\left ( s \right )\in \arg \max_{a \in \mathcal{A}}\sum_{i}Q^{\pi}_i\left ( s, a \right ) \\
\Leftrightarrow &\sum_{i}Q^{\pi}_i\left ( s, \pi\left ( s \right ) \right )\geq \sum_{i}Q^{\pi}_i\left ( s, a \right ) \\
\Leftrightarrow & r_1\left ( s, \pi\left ( s \right ) \right )+ r_2\left ( s, \pi\left ( s \right ) \right )+ \gamma P_{s,\pi\left ( s \right )}\left ( V_1^{\pi}+V_2^{\pi} \right ) \\
&\geq r_1\left ( s, a \right )+ r_2\left ( s, a \right )+ \gamma P_{s,a}\left ( V_1^{\pi}+V_2^{\pi} \right )\\
\Leftrightarrow & B_{\pi}\left ( r_1+r_2 \right )+\gamma B_{\pi}P\left ( I - \gamma G_{{\pi}} \right )^{-1}B_{{\pi}}\left ( r_1+r_2 \right ) \\
&\geq B_{a} \left ( r_1+r_2 \right )+\gamma B_{a} P\left ( I - \gamma G_{{\pi}} \right )^{-1}B_{{\pi}}\left ( r_1+r_2 \right ) \\
\Leftrightarrow &\left (B_{\pi}-B_{a}  \right )\left ( I + \gamma P\left ( I - \gamma G_{{\pi}} \right )^{-1}B_{{\pi}} \right )\left ( r_1+r_2 \right ) \geq 0\\
\Leftrightarrow &\left (B_{\pi}-B_{a}  \right )D_{\pi}\left ( r_1+r_2 \right ) \geq 0
\end{aligned}
\end{equation} 
\end{proof} 
\par
Since any solution that is consistent with \eqref{eq:uCS} ensures a unique uCS, we can borrow the idea introduced by \citeA{Lin2018} and propose a Bayesian approach. The general idea is to maximize the posterior probability of the inferred rewards, $p\left ( r_1, r_2|\pi \right )$,  which can be expressed as
\begin{equation}
p\left ( r_1, r_2|\pi \right )\propto f\left ( r_1,r_2 \right )p\left ( \pi|r_1, r_2  \right ),
\end{equation}
where $p\left ( \pi|r_1, r_2  \right )$ is the likelihood of observing $\pi$ given $r_1$ and $r_2$ and $f\left ( r_1,r_2 \right )$ is a joint prior of $r_1$ and $r_2$ that we need to specify. Recall the assumption that 
\begin{equation}
f\left ( r_1, r_2 \right )=f\left ( r_1 \right )f\left ( r_2 \right ),
\end{equation}
which allows specification of the prior over $r_1$ and $r_2$ independently. We adopt a Gaussian prior on both rewards; that is, ${r}_i\sim \mathcal{N}\left ( {\mu_{r_i}}, \Sigma_{{r_i}} \right )$, where $\mu_{r_i}$ is the mean of ${r_i}$ and $\Sigma_{{r_i}}$ is the covariance. Then the probability density function of ${r_i}$ is
\begin{equation}\label{r_density}
f\left ( {r_i} \right )= \frac{1}{\left ( 2\pi \right )^{N/2}\left | \Sigma_{{r_i}} \right |^{1/2}}\exp\left ( -\frac{1}{2}\left ( {r_i-{\mu_{r_i}}} \right )^T \Sigma^{-1}_{{r_i}} \left ( {r_i-{\mu_{r_i}}} \right ) \right ), i=1,2.
\end{equation}
\par
To model the likelihood function $p(\pi|r_1, r_2)$,  assume that the bi-policy which the two agents follow is a unique uCS given $r_1, r_2$. The likelihood is then a probability mass function given by
\begin{equation}
p\left ( \pi| r_1, r_2 \right )=\begin{cases}
1, &  \mbox{if }\pi\mbox{ is uCS for }r_1, r_2 \\
0, & \mbox{otherwise.}
\end{cases}
\end{equation} 
Thus, the optimization problem for uCS-MIRL can be formulated as 
\begin{equation}\label{uCS_model}
\begin{aligned}
\text{maximize:} \quad
&f\left ( {r_1, r_2} \right ) \\
\text{subject to:} \quad
&p\left ( {\pi}| {r_1, r_2} \right ) = 1.
\end{aligned}
\end{equation}
Equivalently, 
\begin{equation}\label{uCS_complex_program}
\begin{aligned}
\textrm{minimize:}  \quad
& \frac{1}{2}\sum_{i}\left ( {r_i-{\mu_{r_i}}} \right )^T \Sigma_{r_i}^{-1} \left ( {r_i-{\mu_{r_i}}} \right )   \\
\textrm{subject to:} \quad
&\left (B_{\pi}-B_{\pi|a}  \right )D_{\pi}\left ( r_1+r_2 \right ) \geq 0, a \in \mathcal{A}=\mathcal{A}_1\times \mathcal{A}_2.
\end{aligned}
\end{equation}
\subsection{advE-MIRL} \label{subsec:adve_mirl}
The main result characterizing the set of solutions to a two-player advE-MIRL problem is the following:
\begin{theorem} \label{thm:adve}
Given a two-player stochastic game $\left \{ \mathcal{S}, \mathcal{A}_i, r_i, P, \gamma  \right \}$, the observed bi-policy $\pi$ is an advE if and only if
\begin{equation}\label{eq:adve}
\begin{aligned}
&\left ( B_{{\pi}|a_1}-B_{{\pi}} \right )D_{{\pi}}{r}_1\leq 0,\forall a_1 \in \mathcal{A}_1 \\
&\left ( B_{{\pi}|a_2}-B_{{\pi}} \right )D_{{\pi}}{r}_2 \leq 0,\forall a_2 \in\mathcal{A}_2 \\
&\left ( B_{{\pi}|a_1}-B_{{\pi}} \right )D_{{\pi}}{r}_2 \geq 0,\forall a_1 \in\mathcal{A}_1 \\
&\left ( B_{{\pi}|a_2}-B_{{\pi}} \right )D_{{\pi}}{r}_1\geq 0,\forall a_2 \in \mathcal{A}_2,
\end{aligned}
\end{equation}
where $B_{\pi|a_{1}}$ is obtained from such a bi-policy that player 2 employs their original policy while player 1 always chooses action $a_1$ in any state (game). 
\end{theorem}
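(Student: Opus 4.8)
The plan is to mirror the proof of the uCS-MIRL theorem. I would first invoke the correspondence of \citeA{Filar1996}, already used above to extend the equilibrium notions to stochastic games: $\pi$ is an advE of $\mathcal{G}$ if and only if, for every $s\in\mathcal{S}$, $\pi(s)$ is an advE of the single-stage game whose payoff matrices are $Q_1^{\pi}(s,\cdot)$ and $Q_2^{\pi}(s,\cdot)$. So it suffices to characterize the single-stage advE conditions state by state and then rewrite them in matrix form.

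Next I would make the single-stage conditions explicit. Specializing \eqref{r_ne} and \eqref{r_adve} with $R_i$ replaced by $Q_i^{\pi}$, the pair $\pi(s)$ is an advE of the $s$-subgame exactly when, for each player $i$ and each pure deviation,
\[
Q_i^{\pi}(s,\pi(s)) \ge Q_i^{\pi}(s,a_i,\pi_{-i}(s)),\qquad
Q_i^{\pi}(s,\pi(s)) \le Q_i^{\pi}(s,\pi_i(s),a_{-i}),
\]
for all $a_i\in\mathcal{A}_i$ and $a_{-i}\in\mathcal{A}_{-i}$. Here I would note that restricting the deviations to pure actions loses nothing: $Q_i^{\pi}(s,\cdot)$ is multilinear in the two players' mixed strategies, so the extreme value over a single player's probability simplex is attained at a vertex, and hence the pure-action inequalities imply the mixed-strategy ones.

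The final step is bookkeeping with the operators $B_{\pi}$, $B_{\pi|a_1}$, $B_{\pi|a_2}$ applied to $\vec{Q}_i^{\pi}$. By construction, $(B_{\pi}\vec{Q}_i^{\pi})(s)=Q_i^{\pi}(s,\pi(s))=V_i^{\pi}(s)$, which is \eqref{V_Q_B}; $(B_{\pi|a_1}\vec{Q}_i^{\pi})(s)=\sum_{a_2}\pi_2(a_2|s)Q_i^{\pi}(s,a_1,a_2)=Q_i^{\pi}(s,a_1,\pi_2(s))$; and symmetrically $(B_{\pi|a_2}\vec{Q}_i^{\pi})(s)=Q_i^{\pi}(s,\pi_1(s),a_2)$. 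Substituting these together with $\vec{Q}_i^{\pi}=D_{\pi}r_i$ (which is \eqref{Q_complicated} written with the matrix $D_{\pi}$) turns the player-1 and player-2 best-response inequalities into $(B_{\pi|a_1}-B_{\pi})D_{\pi}r_1\le 0$ and $(B_{\pi|a_2}-B_{\pi})D_{\pi}r_2\le 0$, and the two ``no player is hurt by the opponent's deviation'' inequalities into $(B_{\pi|a_1}-B_{\pi})D_{\pi}r_2\ge 0$ and $(B_{\pi|a_2}-B_{\pi})D_{\pi}r_1\ge 0$. Because each of these equivalences holds at every state and for every pure deviation, ranging over all $s$, $a_1$, $a_2$ yields \eqref{eq:adve} in both directions.

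The genuinely delicate points are the pure-deviation reduction and the state-by-state decomposition, which must be stated carefully so that the biconditional of \cref{thm:adve} is valid in both directions, and the verification of the three operator identities for $B_{\pi|a_1}$ and $B_{\pi|a_2}$ acting on $\vec{Q}_i^{\pi}$; once those are in hand, the algebra is the same rearrangement as in the uCS-MIRL proof, with $r_1+r_2$ replaced by $r_1$ or $r_2$ and the inequality direction determined by whether the deviating agent owns the payoff (best response, $\le$) or is the opponent (adversarial, $\ge$).
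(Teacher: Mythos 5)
Your proposal is correct and follows essentially the same route as the paper: reduce to state-wise conditions on the $Q$-functions, translate the best-response inequalities and the ``no player hurt by the opponent's deviation'' inequalities into the operators $B_{\pi}$, $B_{\pi|a_1}$, $B_{\pi|a_2}$ acting on $\vec{Q}_i^{\pi}$, and then substitute $\vec{Q}_i^{\pi}=D_{\pi}r_i$ from \eqref{Q_complicated}. The only difference is one of explicitness: you spell out the Filar-style state-by-state decomposition and the pure-deviation reduction via multilinearity, which the paper takes for granted when it writes the NE and advE conditions directly in the vectorized form \eqref{eq:bipolicy_non0sum} and \eqref{eq:bipolicy_adve}.
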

\begin{proof}\label{pf:proof_adve}
Eqs \eqref{eq:adve} contain four inequalities. In this proof, we will first show that the first and second inequalities constitute a necessary and sufficient condition for $\pi$ being a NE.  Recall that a bi-policy $\pi$ is a minimax equilibrium for a two-player zero-sum game if and only if \cite{Lin2018}
\begin{equation}
\begin{aligned}
\left [ Q^{{\pi}}\left ( s \right ) \right ]^T\pi_1\left ( s \right ) &\geq V^{{\pi}}\left ( s \right ){1}_M\\
Q^{{\pi}}\left ( s \right )\pi_2\left ( s \right )&\leq V^{{\pi}}\left ( s \right ){1}_M.
\end{aligned}
\end{equation}
Similarly, $\pi$ is a NE if and only if
\begin{equation}\label{eq:bipolicy_non0sum}
\begin{aligned}
\left [ Q_2^{{\pi}}\left ( s \right ) \right ]^T\pi_1\left ( s \right ) &\leq V_2^{{\pi}}\left ( s \right ){1}_M\\
Q_1^{{\pi}}\left ( s \right )\pi_2\left ( s \right )&\leq V_1^{{\pi}}\left ( s \right ){1}_M.
\end{aligned}
\end{equation}
Combining \eqref{V_Q_B} and \eqref{eq:bipolicy_non0sum} leads to 
\begin{equation}\label{B_Q_inequalities}
\begin{aligned}
& B_{{\pi}|a_2}\vec{Q}_2^{{\pi}}\leq B_{{\pi}}\vec{Q}_2^{{\pi}}, \forall a_2 \in \mathcal{A}_2 \\
& B_{{\pi}|a_1}\vec{Q}_1^{{\pi}}\leq B_{{\pi}}\vec{Q}_1^{{\pi}}, \forall a_1 \in \mathcal{A}_1. 
\end{aligned}
\end{equation}
Substituting \eqref{Q_complicated} into \eqref{B_Q_inequalities} and rearranging the two sides of the inequalities yields
\begin{equation} 
\begin{aligned}
&\left ( B_{{\pi}|a_1}-B_{{\pi}} \right )D_{{\pi}}{r}_1\leq 0,\forall a_1 \in \mathcal{A}_1 \\
&\left ( B_{{\pi}|a_2}-B_{{\pi}} \right )D_{{\pi}}{r}_2 \leq 0,\forall a_2 \in\mathcal{A}_2. \\
\end{aligned}
\end{equation}
%\begin{equation}\label{eq:bipolicy_0sum}
%\begin{aligned}
%\left [ Q^{{\pi}}\left ( s \right ) \right ]^T\pi_1\left ( s \right ) &\geq V^{{\pi}}\left ( s \right ){1}_M\\
%Q^{{\pi}}\left ( s \right )\pi_2\left ( s \right )&\leq V^{{\pi}}\left ( s \right ){1}_M,
%\end{aligned}
%\end{equation}
%Similarly, a NE for a 2-player general-sum game if and only if
%\begin{equation}\label{eq:bipolicy_non0sum}
%\begin{aligned}
%\left [ Q_2^{{\pi}}\left ( s \right ) \right ]^T\pi_1\left ( s \right ) &\leq V_2^{{\pi}}\left ( s \right ){1}_M\\
%Q_1^{{\pi}}\left ( s \right )\pi_2\left ( s \right )&\leq V_1^{{\pi}}\left ( s \right ){1}_M.
%\end{aligned}
%\end{equation}
%Combining \eqref{V_Q_B} and \eqref{eq:bipolicy_non0sum} leads to 
%\begin{equation}\label{B_Q_inequalities}
%\begin{aligned}
%& B_{{\pi}|a_2}Q_2^{{\pi}}\leq B_{{\pi}}Q_2^{{\pi}}, \forall a_2 \in \mathcal{A}_2 \\
%& B_{{\pi}|a_1}Q_1^{{\pi}}\leq B_{{\pi}}Q_1^{{\pi}}, \forall a_1 \in \mathcal{A}_1, 
%\end{aligned}
%\end{equation}
%where $B_{{\pi}|a_2}$ denotes that player 1 follows the bi-policy $\pi$ while player 2 always takes action $a_2$ in any state. Substituting \eqref{Q_complicated} into \eqref{B_Q_inequalities} and rearrange the two sides of the inequalities, we have 
%\begin{equation}\label{proposition_constraints_inequalities}
%\begin{aligned}
%&\left ( B_{{\pi}|a_1}-B_{{\pi}} \right )D_{{\pi}}{r}_1\leq 0,\forall a_1 \in \mathcal{A}_1 \\
%&\left ( B_{{\pi}|a_2}-B_{{\pi}} \right )D_{{\pi}}{r}_2 \leq 0,\forall a_2 \in\mathcal{A}_2, \\
%\end{aligned}
%\end{equation}
\par
We now turn to the additional feature of advE. Recall \eqref{r_adve}, it is easy to derive that a bi-policy $\pi$ for a two-player general-sum game is an advE, if and only if, in addition to \eqref{eq:bipolicy_non0sum}
\begin{equation}\label{eq:bipolicy_adve}
\begin{aligned}
\left [ Q_1^{{\pi}}\left ( s \right ) \right ]^T\pi_1\left ( s \right ) &\geq V_1^{{\pi}}\left ( s \right ){1}_M\\
Q_2^{{\pi}}\left ( s \right )\pi_2\left ( s \right )&\geq V_2^{{\pi}}\left ( s \right ){1}_M.
\end{aligned}
\end{equation}
Following similar steps to those used to derive \eqref{B_Q_inequalities}, the additional constraints \eqref{eq:bipolicy_adve} can be reduced to 
\begin{equation}\label{proposition_constraints_inequalities_adve}
\begin{aligned}
&\left ( B_{{\pi}|a_1}-B_{{\pi}} \right )D_{{\pi}}{r}_2 \geq 0,\forall a_1 \in\mathcal{A}_1 \\
&\left ( B_{{\pi}|a_2}-B_{{\pi}} \right )D_{{\pi}}{r}_1\geq 0,\forall a_2 \in \mathcal{A}_2.
\end{aligned}
\end{equation}
\end{proof} 
\par
Since it has been proved that, in a one-stage game, if an advE exists it must be unique \cite{Littman2001}, an advE for a stochastic game, must  also be unique, if it exists. Therefore, we can still use a Bayesian approach to solve advE-MIRL problems. The prior \eqref{r_density} is also valid here but the likelihood is modified as follows
\begin{equation}
p\left ( \pi| r_1, r_2 \right )=\begin{cases}
1, &  \mbox{if }\pi\mbox{ is an AdvE for }r_1, r_2 \\
0, & \mbox{otherwise.}
\end{cases}
\end{equation} 
And the optimization problem for advE-MIRL is 
\begin{equation}\label{eq:advE_complex_program}
\begin{aligned}
\textrm{minimize:}  \quad
& \frac{1}{2}\sum_{i}\left ( {r_i-{\mu_{r_i}}} \right )^T \Sigma_{r_i}^{-1} \left ( {r_i-{\mu_{r_i}}} \right )   \\
\textrm{subject to:} \quad
&\left ( B_{{\pi}|a_1}-B_{{\pi}} \right )D_{{\pi}}{r}_1\leq 0,\forall a_1 \in \mathcal{A}_1 \\
&\left ( B_{{\pi}|a_2}-B_{{\pi}} \right )D_{{\pi}}{r}_2 \leq 0,\forall a_2 \in\mathcal{A}_2 \\
&\left ( B_{{\pi}|a_1}-B_{{\pi}} \right )D_{{\pi}}{r}_2 \geq 0,\forall a_1 \in\mathcal{A}_1 \\
&\left ( B_{{\pi}|a_2}-B_{{\pi}} \right )D_{{\pi}}{r}_1\geq 0,\forall a_2 \in \mathcal{A}_2.
\end{aligned}
\end{equation}

In fact, there is a direct link between the minimax equilibrium of a competitive zero-sum game and an advE for a special zero-sum case, as the following proposition, 
\begin{proposition} \label{prop:minimax_advE}
The minimax equilibrium of a single competitive zero-sum game is an advE, and vice versa. 
\end{proposition}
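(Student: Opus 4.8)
The plan is to argue entirely from the defining inequalities, substituting the zero-sum relation $R_2(s,\cdot)=-R_1(s,\cdot)$. Write $R:=R_1(s,\cdot)$; then a (possibly mixed) bi-strategy $\pi(s)$ is a minimax equilibrium of the zero-sum game exactly when it is a saddle point of $R$, i.e.
\begin{equation*}
R\left(s,a_1,\pi_2(s)\right)\;\le\;R\left(s,\pi(s)\right)\;\le\;R\left(s,\pi_1(s),a_2\right),\qquad \forall\,a_1\in\mathcal{A}_1,\ a_2\in\mathcal{A}_2 ,
\end{equation*}
where it suffices to test pure deviations by linearity of the expected payoff. I would then show that this saddle-point pair of inequalities is simultaneously the NE condition \eqref{r_ne} and the extra advE condition \eqref{r_adve}, so that in the zero-sum case the three notions coincide.

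First, minimax $\Rightarrow$ advE. The left inequality is player $1$'s best-response condition in \eqref{r_ne}; negating the right inequality and using $R_2=-R_1$ gives player $2$'s best-response condition in \eqref{r_ne}, so $\pi(s)$ is a NE. For the additional advE requirement \eqref{r_adve}: the player-$1$ condition $R_1(s,\pi(s))\le R_1(s,\pi_1(s),a_2)$ is exactly the right inequality of the saddle point, while the player-$2$ condition, after using $R_2=-R_1$, becomes $R(s,\pi(s))\ge R(s,a_1,\pi_2(s))$, exactly the left inequality. Both hold automatically, so $\pi(s)$ is an advE.

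Second, advE $\Rightarrow$ minimax. This is immediate: by definition an advE is a NE, and in a zero-sum game the set of NE coincides with the set of saddle points, hence with the minimax equilibria. As an independent check one may read this off \cref{thm:adve}: setting $r_2=-r_1$ collapses the four inequality families of \eqref{eq:adve} into the two families of the zero-sum program \eqref{eq:zerosum_program}, which are precisely the constraints characterizing the minimax equilibrium.

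The main obstacle here is bookkeeping rather than mathematics: one must carefully match each player's best-response and advE inequality to the correct half of the saddle-point condition after substituting $R_2=-R_1$, and phrase the argument for possibly mixed strategies $\pi_i(s)$ rather than for pure bi-strategies only. If the statement is also to be read at the stochastic-game level, the only further ingredient is the observation that $V_1^{\pi}=-V_2^{\pi}$ in the zero-sum case (from \eqref{V_comp2}), hence $Q_1^{\pi}=-Q_2^{\pi}$, which lets the single-stage argument apply verbatim in every subgame $\mathcal{G}(s)$.
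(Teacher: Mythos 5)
Your proof is correct, but your primary route differs from the paper's. The paper proves this in one line at the matrix level: it substitutes $r_1 = r = -r_2$ into the four inequality families of \cref{thm:adve}, i.e.\ \eqref{eq:adve}, observes that the third and fourth families then coincide with the first and second, and notes that the surviving pair is exactly the constraint set of the zero-sum program \eqref{eq:zerosum_program}, already known to characterize the minimax equilibrium. That is precisely what you offer only as your ``independent check'' at the end. Your main argument instead works from the raw definitions \eqref{r_ne} and \eqref{r_adve} at the level of a single-stage payoff matrix: under $R_2=-R_1$ the two halves of the saddle-point condition are simultaneously the two Nash best-response inequalities and the two extra advE inequalities, and the converse follows because Nash equilibria and saddle points coincide in two-player zero-sum games. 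Your bookkeeping checks out --- player 1's best response and player 2's advE condition both reduce to the left saddle inequality, and the other two reduce to the right one. Your route is arguably better matched to the statement, which speaks of ``a single competitive zero-sum game'' (a one-stage game), whereas the paper argues with the stochastic-game conditions; your closing observation that $V_1^{\pi}=-V_2^{\pi}$ and hence $Q_1^{\pi}=-Q_2^{\pi}$ is the right glue if the stochastic-game reading is intended. What the paper's route buys is brevity and reuse of \cref{thm:adve}; what yours buys is independence from that theorem and a more transparent view of why the three equilibrium notions collapse into one in the zero-sum case.
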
 
\begin{proof}
Let $r_1 = r = -r_2$. Then \eqref{eq:adve} reduces to 
\begin{equation}\label{proposition_constraints_inequalities_0sum}
\begin{aligned}
&\left ( B_{{\pi}|a_1}-B_{{\pi}} \right )D_{{\pi}}{r}\leq 0,\forall a_1 \in \mathcal{A}_1 \\
&\left ( B_{{\pi}|a_2}-B_{{\pi}} \right )D_{{\pi}}{r} \geq 0,\forall a_2 \in\mathcal{A}_2, \\
\end{aligned}
\end{equation}
which are exactly the constraints of \eqref{eq:zerosum_program}, the  necessary and sufficient conditions for $\pi$ being a minimax equilibrium for a zero-sum game (see \citeR{Lin2018}). 
\end{proof} 

From \cref{prop:minimax_advE} we can see that a advE is a more general concept for general-sum games, whereas the minimax equilibrium is specific to zero-sum games.  
\subsection{cooE-MIRL}
The main result characterizing the set of solutions to a two-player cooE-MIRL problem is the following:
\begin{theorem} \label{thm:cooe}
Given a two-player stochastic game $\left \{ \mathcal{S}, \mathcal{A}_i, r_i, P, \gamma  \right \}$, the observed bi-policy $\pi$ is an CooE if and only if
\begin{equation}\label{eq:cooe}
\begin{aligned}
&\left ( B_{{\pi}|a_1}-B_{{\pi}} \right )D_{{\pi}}{r}_1\leq 0,\forall a_1 \in \mathcal{A}_1 \\
&\left ( B_{{\pi}|a_2}-B_{{\pi}} \right )D_{{\pi}}{r}_2 \leq 0,\forall a_2 \in\mathcal{A}_2 \\
&\left ( B_{{\pi}}-B_{a}\right )D_{{\pi}}{r}_1 \geq 0,\forall a \in \mathcal{A}=\mathcal{A}_1\times \mathcal{A}_2 \\
&\left ( B_{{\pi}}-B_{a}\right )D_{{\pi}}{r}_2 \geq 0,\forall a \in \mathcal{A}=\mathcal{A}_1\times \mathcal{A}_2.
\end{aligned}
\end{equation}
\end{theorem}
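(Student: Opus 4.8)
The plan is to mirror the proof of \cref{thm:adve}, using the fact that a cooE is exactly a NE that additionally satisfies the coordination condition \eqref{r_cooe}. By the stochastic‑game extension (the definition stating that $\pi$ is a cooE of $\mathcal{G}$ iff $\pi(s)$ is a cooE of each subgame $\mathcal{G}(s)$, whose payoffs are $Q_1^\pi(s,\cdot)$ and $Q_2^\pi(s,\cdot)$), the claim reduces to two requirements holding at every state $s$: (i) $\pi(s)$ is a NE of the subgame at $s$; and (ii) for $i=1,2$, $Q_i^\pi(s,\pi(s))\ge Q_i^\pi(s,a)$ for every pure bi‑strategy $a\in\mathcal{A}_1\times\mathcal{A}_2$.

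For requirement (i) I would reuse verbatim the argument already contained in the proof of \cref{thm:adve}: writing the per‑state NE conditions as in \eqref{eq:bipolicy_non0sum}, combining with \eqref{V_Q_B}, stacking over states to obtain the analogue of \eqref{B_Q_inequalities}, and substituting \eqref{Q_complicated} yields exactly the first two lines of \eqref{eq:cooe}, namely $(B_{\pi|a_1}-B_{\pi})D_{\pi}r_1\le 0$ for all $a_1\in\mathcal{A}_1$ and $(B_{\pi|a_2}-B_{\pi})D_{\pi}r_2\le 0$ for all $a_2\in\mathcal{A}_2$. Nothing new is needed here.

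For requirement (ii) I would carry out the same manipulation used to establish the uCS characterization \eqref{eq:uCS}, the only change being that the comparison is applied to each player's $Q$‑function separately rather than to the sum. Fixing $i$ and a pure bi‑strategy $a$, I would expand $Q_i^\pi(s,a)=r_i(s,a)+\gamma P_{s,a}V_i^\pi$ and $Q_i^\pi(s,\pi(s))=r_i(s,\pi(s))+\gamma P_{s,\pi(s)}V_i^\pi$, stack the inequalities $Q_i^\pi(s,\pi(s))\ge Q_i^\pi(s,a)$ over all $s$ (so that $(Q_i^\pi(s,\pi(s)))_s=B_\pi\vec Q_i^\pi=V_i^\pi$ and $(Q_i^\pi(s,a))_s=B_a\vec Q_i^\pi$), and substitute $V_i^\pi=(I-\gamma G_\pi)^{-1}B_\pi r_i$ from \eqref{V_comp2}. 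The system then collapses to
\[
(B_{\pi}-B_{a})\bigl(I+\gamma P(I-\gamma G_{\pi})^{-1}B_{\pi}\bigr)r_i\ \ge\ 0,
\]
i.e.\ $(B_{\pi}-B_{a})D_{\pi}r_i\ge 0$; taking $i=1$ and $i=2$ gives the last two lines of \eqref{eq:cooe}. Since every step above is an equivalence, conjoining (i) and (ii) proves both directions of the theorem.

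I do not anticipate a genuine obstacle: every ingredient has already appeared in the proof of \cref{thm:adve} and in the derivation of \eqref{eq:uCS}, so the work is essentially bookkeeping. The two points that warrant care are checking that the stationary‑bi‑policy operator $B_a$ encodes the state‑indexed vector $(Q_i^\pi(s,a))_s$ as $B_a\vec Q_i^\pi$, in the same way $B_\pi$ encodes $(Q_i^\pi(s,\pi(s)))_s$ via \eqref{V_Q_B}, and making sure \eqref{r_cooe} is lifted to the stochastic game with its inequality sense preserved. A mild subtlety worth flagging is that \eqref{r_cooe} quantifies over \emph{all} joint actions, including those outside the support of $\pi(s)$, so the constraint family in (ii) is indexed by the whole of $\mathcal{A}_1\times\mathcal{A}_2$ — precisely the index set appearing in the theorem statement.
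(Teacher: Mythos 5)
Your proposal is correct and follows essentially the same route as the paper's own proof: the first two inequalities are inherited verbatim from the NE part of the advE argument, and the last two are obtained by running the uCS-style expansion of $Q_i^{\pi}(s,\pi(s))\ge Q_i^{\pi}(s,a)$ separately for each player, collapsing to $(B_{\pi}-B_{a})D_{\pi}r_i\ge 0$. No substantive difference.
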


In \eqref{eq:cooe}, the first two inequalities, which guarantee $\pi$ is a NE, have been established in \cref{subsec:adve_mirl}. The latter two inequalities warrant the unique property of cooE, the proof of which is outlined below.
\begin{proof}
According to the definition of cooE, $\pi$ is a cooE if and only if, for any state $s$ and pure bi-strategy $a \in \mathcal{A}=\mathcal{A}_1\times \mathcal{A}_2$,
\begin{equation}
\begin{aligned}
&\pi\left ( s \right )\in \arg \max_{a \in \mathcal{A}}Q^{\pi}_i\left ( s, a \right ) \\
\Leftrightarrow &Q^{\pi}_i\left ( s, \pi\left ( s \right ) \right )\geq Q^{\pi}_i\left ( s, a \right ) \\
\Leftrightarrow & r_i\left ( s, \pi\left ( s \right ) \right )+ \gamma P_{s,\pi\left ( s \right )} V_i^{\pi} \geq r_i\left ( s, a \right )+\gamma P_{s,a} V_i^{\pi}\\
\Leftrightarrow & B_{\pi} r_i +\gamma B_{\pi}P\left ( I - \gamma G_{{\pi}} \right )^{-1}B_{{\pi}}r_i \\
&\geq B_{a} r_i+\gamma B_{a} P\left ( I - \gamma G_{{\pi}} \right )^{-1}B_{{\pi}}r_i \\
\Leftrightarrow &\left (B_{\pi}-B_{a}  \right )\left ( I + \gamma P\left ( I - \gamma G_{{\pi}} \right )^{-1}B_{{\pi}} \right )r_i \geq 0\\
\Leftrightarrow &\left (B_{\pi}-B_{a}  \right )D_{\pi}r_i \geq 0.
\end{aligned}
\end{equation} 
\end{proof} 

Using the same reasoning as in the case of advE, it is easy to show that a cooE for a stochastic game is unique, if it exists. As a result, the Bayesian approach is also valid here, with the same prior \eqref{r_density} but a different likelihood as follows
\begin{equation}
p\left ( \pi| r_1, r_2 \right )=\begin{cases}
1, &  \mbox{if }\pi\mbox{ is an cooE for }r_1, r_2 \\
0, & \mbox{otherwise.}
\end{cases}
\end{equation} 
Hence the optimization problem for cooE-MIRL is 
\begin{equation}\label{eq:cooE_complex_program}
\begin{aligned}
\textrm{minimize:}  \quad
& \frac{1}{2}\sum_{i}\left ( {r_i-{\mu_{r_i}}} \right )^T \Sigma_{r_i}^{-1} \left ( {r_i-{\mu_{r_i}}} \right )   \\
\textrm{subject to:} \quad
&\left ( B_{{\pi}|a_1}-B_{{\pi}} \right )D_{{\pi}}{r}_1\leq 0,\forall a_1 \in \mathcal{A}_1 \\
&\left ( B_{{\pi}|a_2}-B_{{\pi}} \right )D_{{\pi}}{r}_2 \leq 0,\forall a_2 \in\mathcal{A}_2 \\
&\left ( B_{{\pi}}-B_{a}\right )D_{{\pi}}{r}_1 \geq 0,\forall a \in \mathcal{A}=\mathcal{A}_1\times \mathcal{A}_2 \\
&\left ( B_{{\pi}}-B_{a}\right )D_{{\pi}}{r}_2 \geq 0,\forall a \in \mathcal{A}=\mathcal{A}_1\times \mathcal{A}_2.
\end{aligned}
\end{equation}
\subsection{uCE-MIRL} \label{subsec:uce_mirl}
The result that characterizes the set of solutions to a two-player CE-MIRL problem is as follows:
\begin{theorem}
Given a two-player stochastic game $\left \{ \mathcal{S}, \mathcal{A}_i, r_i, P, \gamma  \right \}$, the observed bi-policy $\pi$ is a CE if and only if 
\begin{equation}\label{eq:CE} 
\vec{\pi}^T H\left ( s,a_i \right )^T\left [ H\left ( s,a_i \right )-H\left ( s,\check{a}_i \right ) \right ] D_{{\pi}}r_i \geq 0, i=1,2, \forall a_i \in \mathcal{A}_i, \check{a}_i \in \mathcal{A}_i \setminus a_i, 
\end{equation}
where $\vec{\pi}$ is restructured from $\pi$ to be a column vector of length $NM^2$. $H(s, a_i)$ is a sparse matrix of size $M\times NM^2$. Like $B_{\pi}$ defined in \eqref{r_ave}, $H(s, a_i)$ is also a linear transformation operator. Specifically, $\left [ R_1 \left ( s, a_1, . \right ) \right ]^T=H\left ( s,a_1 \right )r_1$ and $R_2 \left ( s, ., a_2 \right )=H\left ( s,a_2 \right )r_2$. Recall that here $R_1 \left ( s, a_1, . \right )$ is a $1 \times M$ row vector and $R_2 \left ( s, ., a_2 \right )$ is a $M \times 1$ column vector. Both $r_1$ and $r_2$ are $NM^2 \times 1$ column vectors.
\end{theorem}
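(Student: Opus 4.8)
The plan is to follow the same template as the proof of the uCS theorem: translate the state-by-state correlated-equilibrium inequalities into statements about the $Q$-functions via the stochastic-game-to-single-stage-game correspondence, and then push everything through the linear operators $H(s,a_i)$ and $D_\pi$ until the claimed quadratic form emerges. Every manipulation will be an equivalence, so a single chain handles both directions at once.

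First I would start from the single-stage characterization of a CE recalled in \cref{subsec:non_cooperative_equlibrium}. Multiplying the defining inequality through by $P(X_i=a_i)\ge 0$ clears the conditioning and simultaneously disposes of the side condition $P(X_i=a_i)>0$ (when that probability vanishes, every term is zero), giving the equivalent unconditioned form $\sum_{a_{-i}}\pi(s,(a_i,a_{-i}))\big[R_i(s,a_i,a_{-i})-R_i(s,\check a_i,a_{-i})\big]\ge 0$. By the definition of a stochastic-game CE given above — $\pi$ is a CE of $\mathcal G$ iff $\pi(s)$ is a CE of every subgame $\mathcal G(s)$ whose payoffs are the $Q$-functions $Q_i^\pi(s,\cdot)$ — I replace each $R_i(s,\cdot)$ by $Q_i^\pi(s,\cdot)$, so the requirement becomes $\sum_{a_{-i}}\pi(s,(a_i,a_{-i}))\big[Q_i^\pi(s,a_i,a_{-i})-Q_i^\pi(s,\check a_i,a_{-i})\big]\ge 0$ for all $s$, all $a_i\in\mathcal A_i$, and all $\check a_i\in\mathcal A_i\setminus a_i$.

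Next I recognize the left-hand side as an inner product of two length-$M$ vectors: the slice of joint action probabilities with $X_i=a_i$ held fixed, and the corresponding slice of $Q$-values. Because $\vec Q_i^\pi$ carries the same block structure as $r_i$, the operator $H(s,a_i)$ acts on it exactly as on $r_i$, so $[Q_i^\pi(s,a_i,\cdot)]^\top=H(s,a_i)\vec Q_i^\pi$; applying the same operator to the identically structured vector $\vec\pi$ extracts $[\pi(s,a_i,\cdot)]^\top=H(s,a_i)\vec\pi$. Substituting $\vec Q_i^\pi=D_\pi r_i$ from \eqref{Q_complicated} (with $D_\pi=I+\gamma P(I-\gamma G_\pi)^{-1}B_\pi$, as in the uCS theorem) and rewriting the sum as $(H(s,a_i)\vec\pi)^\top H(s,a_i)D_\pi r_i-(H(s,a_i)\vec\pi)^\top H(s,\check a_i)D_\pi r_i$, I factor $(H(s,a_i)\vec\pi)^\top=\vec\pi^\top H(s,a_i)^\top$ out on the left and $D_\pi r_i$ on the right to land on exactly $\vec\pi^\top H(s,a_i)^\top\big[H(s,a_i)-H(s,\check a_i)\big]D_\pi r_i\ge 0$. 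Ranging over all $s$, $a_i$, $\check a_i$ and over $i=1,2$ reproduces \eqref{eq:CE}, and every step being reversible yields both implications.

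The main obstacle is bookkeeping rather than anything conceptual: one must check carefully that $H(s,a_i)$ — introduced in the statement only through its action on reward vectors — genuinely selects the intended slice when applied to $\vec Q_i^\pi$ and to $\vec\pi$, i.e.\ that these three $NM^2$-vectors are indexed consistently; and one must accommodate the mild asymmetry between $[R_1(s,a_1,\cdot)]^\top=H(s,a_1)r_1$ (a row slice indexed by player~2's action) and $R_2(s,\cdot,a_2)=H(s,a_2)r_2$ (a column slice indexed by player~1's action) so that the scalar $\vec\pi^\top H(s,a_i)^\top[\cdots]D_\pi r_i$ has the correct shape for both players. Once these indexing conventions are pinned down, the remaining algebra is a routine transcription of the single-stage inequality.
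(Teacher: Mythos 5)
Your proposal is correct and follows essentially the same route as the paper's proof: both pass from the state-by-state CE inequalities on the $Q$-functions to the vector form via $H(s,a_i)$ acting on $\vec{\pi}$ and on the $Q$-vector, then substitute the matrix expression for $Q$ in terms of $r_i$ to obtain \eqref{eq:CE}. The only (harmless) difference is that you apply $H(s,a_i)$ directly to the identity $\vec{Q}_i^{\pi}=D_{\pi}r_i$ from \eqref{Q_complicated}, whereas the paper expands $Q_i^{\pi}(s,a)=R_i(s,a)+\gamma p(\cdot|s,a)V_i^{\pi}$ and substitutes $V_i^{\pi}=(I-\gamma G_{\pi})^{-1}B_{\pi}r_i$ separately before recombining into $D_{\pi}$.
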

\begin{proof}
By definition of CE, for a two-player general-sum stochastic game $\mathcal{G}$, a bi-policy $\pi$ is a CE if and only if 
\begin{equation} \label{eq:ce_sg1}
\begin{aligned}
\sum_{a_2}\pi\left ( a_1, a_2|s \right )Q_1^{\pi}\left ( s, a_1, a_2 \right ) &\geq \sum_{a_2}\pi\left ( a_1, a_2|s \right )Q_1^{\pi}\left ( s, \check{a}_1, a_2 \right ),\forall a_1 \in \mathcal{A}_1, \check{a}_1 \in \mathcal{A}_1 \setminus a_1  \\
\sum_{a_1}\pi\left ( a_1, a_2|s \right )Q_2^{\pi}\left ( s, a_1, a_2 \right ) &\geq \sum_{a_1}\pi\left ( a_1, a_2|s \right )Q_2^{\pi}\left ( s, a_1, \check{a}_2 \right ),\forall a_2 \in \mathcal{A}_2, \check{a}_2 \in \mathcal{A}_2 \setminus a_2,
\end{aligned}
\end{equation}
for all $s \in \mathcal{S}$. Rearranging \eqref{eq:ce_sg1} yields
\begin{equation} \label{eq:ce_sg2}
\begin{aligned}
\pi\left ( a_1, .|s \right )  \left ( \left [ Q_1^{\pi}\left ( s,a_1, . \right ) \right ]^T- \left [ Q_1^{\pi}\left ( s,\check{a}_1, . \right ) \right ]^T \right ) &\geq 0 \\
\left [ \pi\left ( ., a_2|s \right ) \right ]^T \left ( Q_2^{\pi}\left ( s,., a_2 \right )- Q_2^{\pi}\left ( s,., \check{a}_2 \right ) \right ) &\geq 0,
\end{aligned}
\end{equation} 
where $\pi\left ( a_1, .|s \right )$ is a row vector of $1 \times M$, spanning over all $a_2 \in \mathcal{A}_2$, and $\pi\left ( ., a_2|s \right )$ is a column vector of $M \times 1$, spanning over all $a_1 \in \mathcal{A}_1$. Recall 
\begin{equation}
Q_i^{\pi}\left ( s,a \right )=R_i \left ( s, a \right ) + \gamma \sum_{s'}p\left ( s'|s, a \right )V_i^{\pi}\left ( s' \right ).
\end{equation}
So 
\begin{equation} \label{eq:13}
\begin{aligned}
\left [ Q_1^{\pi}\left ( s,a_1, . \right ) \right ]^T&=\left [ R_1 \left ( s, a_1, .\right ) \right ]^T + \gamma p\left ( . |s, a_1, . \right )V_1^{\pi} \\
Q_2^{\pi}\left ( s,., a_2 \right ) &=R_2 \left ( s, ., a_2\right ) + \gamma p\left ( . |s, ., a_2 \right )V_2^{\pi}.
\end{aligned}
\end{equation}
Substituting \eqref{eq:13} into \eqref{eq:ce_sg2} leads to 
\begin{equation} \label{eq:ce_sg3}
\begin{aligned}
&\pi\left ( a_1, .|s \right ) \left \{ \left [ R_1 \left ( s, a_1, .\right ) \right ]^T-\left [ r_1 \left ( s, \check{a}_1, .\right ) \right ]^T + \gamma \left [ p\left ( . |s, a_1, . \right )-p\left ( . |s, \check{a}_1, . \right ) \right ]V_1^{\pi}　\right \} \geq 0 \\
&\left [ \pi\left ( ., a_2|s \right ) \right ]^T \left \{ R_2 \left ( s, ., a_2\right )-r_2 \left ( s, ., \check{a}_2\right ) + \gamma \left [ p\left ( . |s, ., a_2 \right )-p\left ( . |s, ., \check{a}_2 \right ) \right ]V_2^{\pi}　\right \} \geq 0. 
\end{aligned}
\end{equation}
\par
The above inequality can be further simplified.  It is also easy to see $p\left ( . |s, a_1, . \right )=H\left ( s,a_1 \right )P$ and $p\left ( . |s, .,a_2 \right )=H\left ( s,a_2 \right )P$. In addition, we can also have $\pi\left ( a_1, .|s \right )=\left [ H\left ( s,a_1 \right )\vec{\pi} \right ]^T=\vec{\pi}^TH\left ( s,a_1 \right )^T$, and $\pi\left ( ., a_2|s \right )=H\left ( s,a_2 \right )\vec{\pi}$. Substituting \eqref{V_comp2}
into \eqref{eq:ce_sg3} and rearranging it, we can get 
\begin{equation} \label{eq:ce_sg4}
\vec{\pi}^T H\left ( s,a_i \right )^T\left [ H\left ( s,a_i \right )-H\left ( s,\check{a}_i \right ) \right ] \left ( I + \gamma P\left ( I - \gamma G_{{\pi}} \right )^{-1}B_{{\pi}} \right )r_i \geq 0, i=1,2,
\end{equation}
Recall 
\begin{equation}
D_{{\pi}}=I + \gamma P\left ( I - \gamma G_{{\pi}} \right )^{-1}B_{{\pi}},
\end{equation}
we can express \eqref{eq:ce_sg4} compactly as 
\begin{equation} \label{eq:ce_sg5}
\vec{\pi}^T H\left ( s,a_i \right )^T\left [ H\left ( s,a_i \right )-H\left ( s,\check{a}_i \right ) \right ] D_{{\pi}}r_i \geq 0, i=1,2, \forall a_i \in \mathcal{A}_i, \check{a}_i \in \mathcal{A}_i \setminus a_i,
\end{equation} 
\end{proof} 
\par
Any sensible point that is consistent with \eqref{eq:ce_sg5} constitutes a CE for the stochastic game. Many points in the convex hull of CE, however, are less meaningful because only the uCE is of interest. Hence, we desire to find some way to choose between solutions satisfying \eqref{eq:ce_sg5}. A first idea is to maximize $\sum_{s} V^{\pi}\left ( s \right )$. Finding a uCS is a much easier problem, by contrast. This fact gives rise to another idea. Before going into details, we introduce four concepts: \emph{cooperation gap}, \emph{local uCS}, \emph{local improvement} and \emph{local reduced gap}.
\begin{definition}
The cooperation gap $I_{\text{cg}}^{\pi}\left ( s \right )$, corresponding to a starting state $s$ and a bi-policy $\pi$ in a two-player general-sum stochastic game, is the total game value difference between $\pi$ and $\pi^*$, where $\pi^*$ is any uCS; specifically,
\begin{equation*}
I_{\text{cg}}^{\pi}\left ( s \right )  =  V^{\pi^*}\left ( s \right ) - V^{\pi}\left ( s \right ), s \in \mathcal{S}.
\end{equation*}
\end{definition}

\begin{definition}
The local uCS, corresponding to a starting state $s$ and a bi-policy $\pi$ in a two-player general-sum stochastic game, is  a bi-policy for which the two players employ a uCS bi-policy $\pi^*$ at $s$ and then employ $\pi$ afterwards.
\end{definition}

\begin{definition}
The local improvement $I_{\text{imp}}^{\pi}\left ( s \right )$, corresponding to a starting state $s$ and a bi-policy $\pi$ in a two-player general-sum stochastic game, is the total game value gain by employing the local uCS.
\end{definition}

\begin{definition}
The local reduced gap $I_{\text{rg}}^{\pi}\left ( s \right )$, corresponding to a starting state $s$ and a bi-policy $\pi$ in a two-player general-sum stochastic game, is the total game value difference between a uCS and a local uCS; specifically, 
\begin{equation*}
I_{\text{rg}}^{\pi}\left ( s \right )  =  V^{\pi^*}\left ( s \right ) - Q^{\pi}\left ( s, \pi^*\left ( s \right ) \right ), s \in \mathcal{S},
\end{equation*}
and the total local improvement for $\pi$ is
\begin{equation} \label{eq:total_reduced_gap}
I_{\text{rg}}^{\pi} = \sum_{s}I_{\text{rg}}^{\pi}\left ( s \right ) = \sum_{s} V^{\pi^*}\left ( s \right ) - Q^{\pi}\left ( s, \pi^*\left ( s \right ) \right ), s \in \mathcal{S}.
\end{equation}
\end{definition}

An implication from the above definitions is that for a starting state $s$, $I_{\text{rg}}^{\pi}\left ( s \right ) = I_{\text{cg}}^{\pi}\left ( s \right ) - I_{\text{imp}}^{\pi}\left ( s \right )$, shown in \cref{fig:ce_uce_ucs}.
 
It is obvious that for a two-player general sum stochastic game, among all its CEs, the uCE is closest to its uCS in terms of the total game value, as illustrated  in \cref{fig:ce_uce_ucs}. In a uCE-MIRL problem, however, all CEs except uCE are unobservable. Therefore, we need to find a way to infer a set of rewards $\{r_1,r_2\}$ such that the observed $\pi$ is most likely the uCE of the game. 
%\begin{figure}[tbp]
%  \centering
%  \includegraphics[width=4in]{Figures/Chapter4/ce_uce.eps}\\
%  \caption{Grid games. The circle indicates A's goal and the hexagon indicates B's goal.}\label{fig:ce_uce}
%\end{figure}
%\begin{figure}[H] 
%%\captionsetup{singlelinecheck = false, justification=justified}
%  \begin{subfigure}[b]{0.50\textwidth}
%    \includegraphics[width=\textwidth]{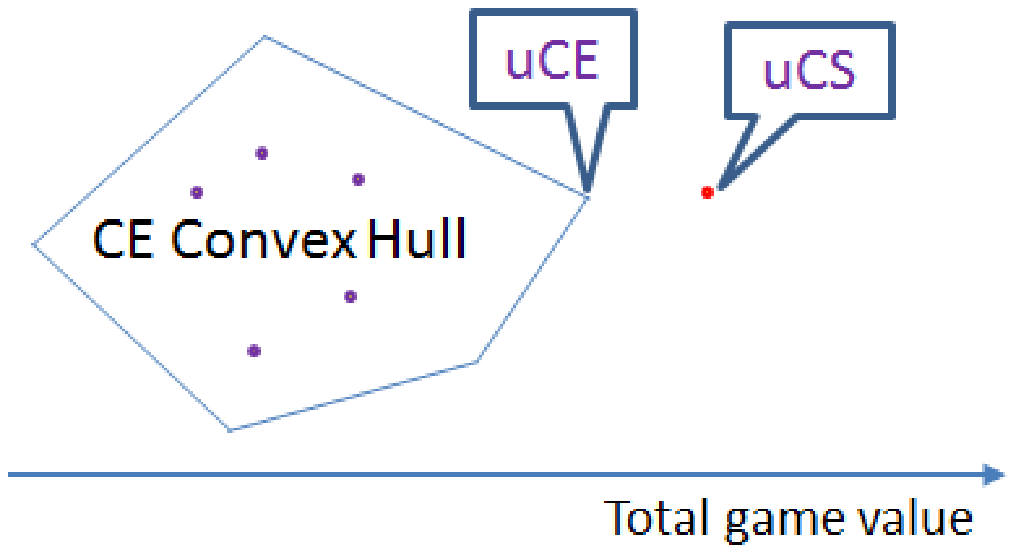}
%    \centering 
%    \caption{}
%  \end{subfigure}
%  \begin{subfigure}[b]{0.50\textwidth}
%    \includegraphics[width=\textwidth]{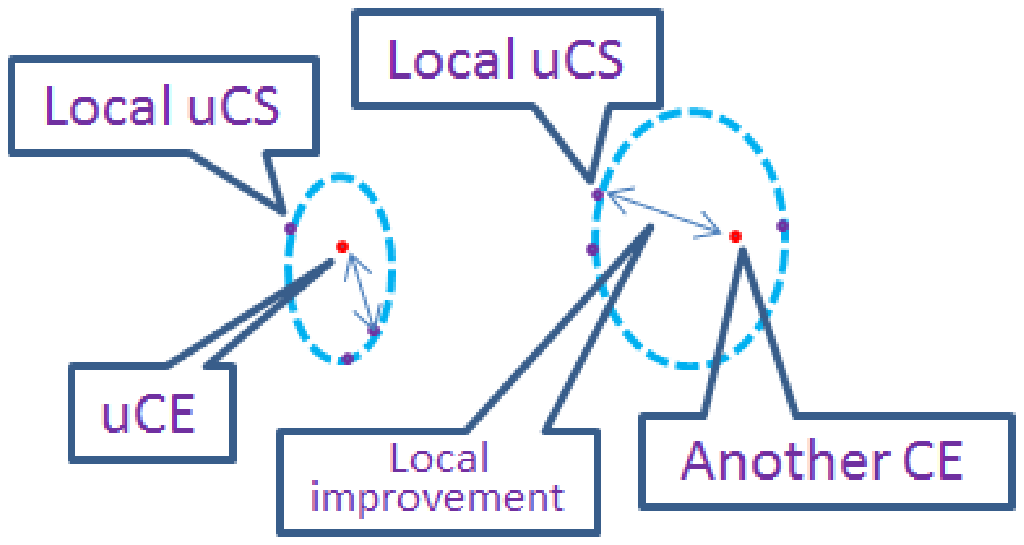}
%    \centering  
%    \caption{}
%  \end{subfigure}
%  \caption{(A) describes the relationship between uCE, uCS and other CEs. (B) explains local uCS.}\label{fig:ce_uce_ucs}
%\end{figure}
\begin{figure}[ht]
  \centering
  \includegraphics[width=5in]{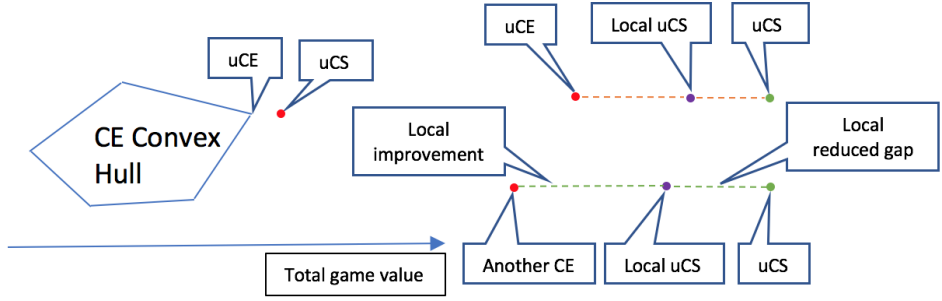}\\
  \caption{The game value distance relationship between uCE, uCS and other CEs}\label{fig:ce_uce_ucs}
\end{figure}

By definition, a local uCS improves $V^{\pi}\left ( s \right )$ by employing a uCS strategy only at current state $s$, resulting in a \emph{local improvement} with respect to $\pi$ and a local reduced gap with respect to a uCS. Adding up all those local reduced gaps over all states gives a measure of how close a bi-policy $\pi$ is to a uCS, in terms of the total game value. We now propose an important theorem that captures the relationship between the local reduced gap and uCE, as follows:
\begin{theorem} \label{thm:local_improvement}
Consider a two-person general-sum stochastic game $\Gamma$ with a collection of CEs, $\Pi_{\text{CE}}$ and a bi-policy $\pi^*$ that is a uCS. Then $\pi^*_{\text{CE}} \in \Pi_{\text{CE}}$ is a uCE if and only if its total local reduced gap $I_{\text{rg}}^{\pi^*_{\text{CE}}}$ is no greater than that of any other CE, specifically,
\begin{equation}
I_{\text{rg}}^{\pi^*_{\text{CE}}} \leq I_{\text{rg}}^{\pi_{\text{CE}}}, \forall \pi_{\text{CE}} \in \Pi_{\text{CE}}.
\end{equation}
\end{theorem}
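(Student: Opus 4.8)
The plan is to rewrite the four auxiliary quantities of this subsection in the $Q$-function formalism of \cref{subsec:multi_RL_IRL}, collapse the total local reduced gap into a weighted comparison of total game values over $\Pi_{\text{CE}}$, and then use that the uCS $\pi^{*}$ induces the optimal value function of the ``cooperative MDP'' with reward $r_{1}+r_{2}$ and transitions $P$.

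First I would invoke \eqref{V_Q_B}, which gives $V_{i}^{\sigma}(s)=Q_{i}^{\sigma}(s,\sigma(s))$ for any bi-policy $\sigma$, to write $V^{\pi^{*}}(s)=Q^{\pi^{*}}(s,\pi^{*}(s))$ and hence $I_{\text{rg}}^{\pi}(s)=Q^{\pi^{*}}(s,\pi^{*}(s))-Q^{\pi}(s,\pi^{*}(s))$ for every CE $\pi$. Both $Q$-terms evaluate the \emph{same} bi-strategy $\pi^{*}(s)$ at $s$, so by \eqref{Q_element} their immediate-reward parts cancel and only the discounted continuations remain; writing $V^{\pi}:=V_{1}^{\pi}+V_{2}^{\pi}$ and using \eqref{G_def},
\[
I_{\text{rg}}^{\pi}(s)=\gamma\sum_{s'} g_{\pi^{*}}(s'\,|\,s)\bigl(V^{\pi^{*}}(s')-V^{\pi}(s')\bigr),
\qquad\text{hence}\qquad
I_{\text{rg}}^{\pi}=\gamma\sum_{s'} c(s')\bigl(V^{\pi^{*}}(s')-V^{\pi}(s')\bigr),
\]
with $c(s'):=\sum_{s} g_{\pi^{*}}(s'\,|\,s)\ge 0$ depending only on $\pi^{*}$. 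Next I would note that the proof of the uCS characterization shows $\pi^{*}$ being a uCS is equivalent to $\sum_{i}Q_{i}^{\pi^{*}}(s,\pi^{*}(s))\ge\sum_{i}Q_{i}^{\pi^{*}}(s,a)$ for all $s$ and all $a$, i.e.\ $V^{\pi^{*}}$ solves the Bellman optimality equation for the MDP with reward $r_{1}+r_{2}$ and transitions $P$; hence $V^{\pi^{*}}$ is that MDP's optimal value function and $V^{\pi^{*}}(s')\ge V^{\pi}(s')$ for every $s'$ and every (possibly correlated) bi-policy $\pi$. So every summand above is nonnegative, $I_{\text{rg}}^{\pi}\ge 0$, and minimizing $I_{\text{rg}}^{\pi}$ over $\Pi_{\text{CE}}$ is equivalent to maximizing the weighted total value $\sum_{s'} c(s')\,V^{\pi}(s')$.

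What remains --- and where the real work lies --- is to identify this minimizer as the uCE. The clean route is to prove the sharper statement that the uCE's total value function \emph{pointwise} dominates that of every other CE: $V^{\pi^{*}_{\text{CE}}}(s)\ge V^{\pi_{\text{CE}}}(s)$ for all $s$ and all $\pi_{\text{CE}}\in\Pi_{\text{CE}}$, with equality everywhere only when $\pi_{\text{CE}}=\pi^{*}_{\text{CE}}$ (the uniqueness of the uCE being already granted in the text). Given this, the uCE simultaneously minimizes every per-state gap $V^{\pi^{*}}(s)-V^{\pi}(s)$, hence every nonnegatively weighted combination of them, hence $I_{\text{rg}}^{\pi}$, and no other CE does; this yields both directions of the stated equivalence at once. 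To establish the pointwise-domination claim I would run an equilibrium-improvement iteration: starting from the value pair of an arbitrary CE, at each state replace the current equilibrium by the utilitarian CE of the subgame formed with the current continuation values --- which by definition weakly raises $\sum_{i}Q_{i}(s,\cdot)$, i.e.\ the total value at $s$ --- and iterate to its fixed point $(V_{1}^{\pi^{*}_{\text{CE}}},V_{2}^{\pi^{*}_{\text{CE}}})$. I expect the main obstacle to be the monotonicity of this iteration through the discounted continuation: unlike in an ordinary MDP the constraint set (the correlated-equilibrium polytope of each subgame) itself moves with the continuation values, so one must carry the whole pair $(V_{1},V_{2})$ and argue that a total-value gain at one state is not undone elsewhere --- the analogue of, but genuinely harder than, the policy-improvement step behind value iteration. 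A minor loose end for the ``only if'' direction is that it uses the positively weighted states $\{s':c(s')>0\}$ to pin down $\pi^{*}_{\text{CE}}$; this is automatic whenever the chain induced by $\pi^{*}$ reaches every state, and can otherwise be side-stepped by invoking directly the standing fact (\cref{fig:ce_uce_ucs}) that the uCE is the unique maximum-total-value CE.
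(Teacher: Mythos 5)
Your proposal is correct and follows essentially the same route as the paper: you perform the same core computation (the immediate rewards at $s$ cancel because both $Q$-terms evaluate the bi-strategy $\pi^{*}(s)$, leaving a $\gamma$-discounted, nonnegatively weighted comparison of the continuation total values), and both directions then rest on the pointwise total-value domination $V^{\pi^{*}_{\text{CE}}}(s)\ge V^{\pi_{\text{CE}}}(s)$ of the uCE over every other CE. The only divergence is that the paper treats that domination as an immediate consequence of the (state-wise) definition of uCE rather than proving it via the equilibrium-improvement iteration you sketch and leave open; your own fallback of invoking it as a standing fact is exactly what the paper does, and your observation about states with zero visitation weight under $\pi^{*}$ is a genuine subtlety that the paper's sufficiency argument also passes over silently.
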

\begin{proof}

%Recall that
%\begin{equation}
%\begin{aligned}
%V^{\pi^*}\left ( s \right ) &= V_1^{\pi^*}\left ( s \right )+V_2^{\pi^*}\left ( s \right ) = Q_1^{\pi^*}\left ( s, \pi^*\left ( s \right ) \right )+Q_2^{\pi^*}\left ( s, \pi^*\left ( s \right ) \right ) \\
%V^{\pi^*_{\text{CE}}}\left ( s \right ) &= V_1^{\pi^*_{\text{CE}}}\left ( s \right )+V_2^{\pi^*_{\text{CE}}}\left ( s \right ) = Q_1^{\pi^*_{\text{CE}}}\left ( s, \pi^*_{\text{CE}}\left ( s \right ) \right )+Q_2^{\pi^*_{\text{CE}}}\left ( s, \pi^*_{\text{CE}}\left ( s \right ) \right ) \\
%V^{\pi_{\text{CE}}}\left ( s \right ) &= V_1^{\pi_{\text{CE}}}\left ( s \right )+V_2^{\pi_{\text{CE}}}\left ( s \right ) = Q_1^{\pi_{\text{CE}}}\left ( s, \pi_{\text{CE}}\left ( s \right ) \right )+Q_2^{\pi_{\text{CE}}}\left ( s, \pi_{\text{CE}}\left ( s \right ) \right ),
%\end{aligned}
%\end{equation}
%if $\pi^*_{\text{CE}}$ is the uCE of $\Gamma$, the relationship between $\pi^*$, $\pi^*_{\text{CE}}$ and $\pi_{\text{CE}}$ is, according to our definitions of them, 
%\begin{equation} \label{eq:uCS_property}
%V^{\pi^*}\left ( s \right ) \geq V^{\pi^*_{\text{CE}}}\left ( s \right ) \geq V^{\pi_{\text{CE}}}\left ( s \right ), \forall s \in \mathcal{S}.
%\end{equation}
We first show necessity. From \eqref{eq:total_reduced_gap} and the properties of value function, we have 
\begin{equation} \label{eq:local_improvement_proof}
\begin{aligned}
&I_{\text{rg}}^{\pi^*_{\text{CE}}} - I_{\text{rg}}^{\pi_{\text{CE}}} \\
=& \left [ \sum_{s} V^{\pi^*}\left ( s \right ) - Q^{\pi^*_{\text{CE}}}\left ( s, \pi^*\left ( s \right ) \right ) \right ] - \left [ \sum_{s} V^{\pi^*}\left ( s \right ) - Q^{\pi_{\text{CE}}}\left ( s, \pi^*\left ( s \right ) \right ) \right ] \\
= & \sum_{s} Q^{\pi_{\text{CE}}}\left ( s, \pi^*\left ( s \right ) \right ) - Q^{\pi^*_{\text{CE}}}\left ( s, \pi^*\left ( s \right ) \right ) \\
= & \sum_{s} \left \{ \left [ \tilde{r}_1\left ( s, \pi^*\left ( s \right ) \right )+\tilde{r}_2\left ( s, \pi^*\left ( s \right ) \right )+ \gamma P_{s, \pi^*\left ( s \right )} V^{\pi_{\text{CE}}} \right ] \right. \\
  & \left. -\left [ \tilde{r}_1\left ( s, \pi^*\left ( s \right ) \right )+\tilde{r}_2\left ( s, \pi^*\left ( s \right ) \right )+ \gamma P_{s, \pi^*\left ( s \right )} V^{\pi^*_{\text{CE}}} \right ]  \right \}\\
= & \gamma \sum_{s} P_{s, \pi^*\left ( s \right )} \left ( V^{\pi_{\text{CE}}} - V^{\pi^*_{\text{CE}}}  \right ) .
\end{aligned}
\end{equation}
Since the definition of $\pi^*_{\text{CE}}$ implies that $V^{\pi_{\text{CE}}}\left ( s \right ) \leq V^{\pi^*_{\text{CE}}}\left ( s \right )$ for all $s$, the column vector $V^{\pi_{\text{CE}}} - V^{\pi^*_{\text{CE}}}$ is non-positive. Also, $P_{s, \pi^*\left ( s \right )}$ is a non-negative row vector as all its entries are probabilities. Therefore, $P_{s, \pi^*\left ( s \right )} \left ( V^{\pi_{\text{CE}}} - V^{\pi^*_{\text{CE}}}  \right ) \leq 0$ for all $s \in \mathcal{S}$, and consequently, $I_{\text{rg}}^{\pi^*_{\text{CE}}} \leq I_{\text{rg}}^{\pi_{\text{CE}}}$.
 
Next, we show sufficiency by assuming $I_{\text{rg}}^{\pi^*_{\text{CE}}} \leq I_{\text{rg}}^{\pi_{\text{CE}}}$, for all $\pi^*_{\text{CE}} \in \pi_{\text{CE}}$ and that $\pi^*_{\text{CE}}$ is not a uCE. Since $\pi^*_{\text{CE}}$ is not a uCE, there must exist a uCE,  $\pi_{\text{uCE}}$, such that $V^{\pi_{\text{uCE}}}\left ( s \right ) > V^{\pi^*_{\text{CE}}}\left ( s \right )$, for all $s \in \mathcal{S}$. Then from \eqref{eq:local_improvement_proof} we can conclude 
\begin{equation*}
I_{\text{rg}}^{\pi^*_{\text{CE}}} - I_{\text{rg}}^{\pi_{\text{uCE}}} = \gamma \sum_{s} P_{s, \pi^*\left ( s \right )} \left ( V^{\pi_{\text{uCE}}} - V^{\pi^*_{\text{CE}}}  \right ) > 0,
\end{equation*} 
which contradicts our assumption that $I^{\pi^*_{\text{CE}}} \leq I^{\pi_{\text{CE}}}$, for all $\pi_{\text{CE}} \in \Pi_{\text{CE}}$.
\end{proof}

The intuition behind \cref{thm:local_improvement} is: comparing to any other CE, a uCE is closer to the uCS. Its corresponding local uCS is even closer to uCS and as a result, there is less room to further shrink the local reduced gap. Hence the smaller the local reduced gap is, the more likely a CE $\pi$ is a uCE. Thus, given a CE $\pi$, a desired pair of $r_1$ and $r_2$ satisfies           
%Based on these facts, a reasonable and favourable pair of $r_1$ \& $r_2$ should have the following properties
%\begin{itemize}
%\item consistent with \eqref{eq:ce_sg5} so that the observed joint policy $pi$ is a CE;
%\item for any single game, the utilitarian game value under $\pi$ is as close to that of the uCS as possible;
%\item for any single game, the utilitarian game value under $\pi$ is relatively large because there could be many other CEs whose utilitarian game value is smaller. 
%\end{itemize} 
%The second property requires us to solve the following LP problem 
%\begin{equation*}
%\begin{aligned}
%Q^{\pi}_1\left ( s, a_1, a_2  \right )+Q^{\pi}_2\left ( s, a_1, a_2  \right ) &\leq y\left ( s \right ) \\
%V^{\pi}_1\left ( s\right )+V^{\pi}_2\left ( s\right ) &\leq y\left ( s \right )
%\end{aligned}
%\end{equation*} 
\begin{equation} \label{eq:lp_reduced_gap}
\begin{aligned}
\textrm{minimize:}  \quad
& \sum_{s} y\left ( s \right ) - V^{\pi}\left ( s\right )  \\
\textrm{subject to:} \quad
&Q^{\pi}_1\left ( s, a  \right )+Q^{\pi}_2\left ( s, a  \right ) \leq y\left ( s \right ) \\
\quad
&V^{\pi}\left ( s\right ) \leq y\left ( s \right ),
\end{aligned}
\end{equation}
for all $a \in \mathcal{A}=\mathcal{A}_1 \times \mathcal{A}_2$. 

However, $r_1 = r_2 = \mathbf{0}$ is the optimal solution to \eqref{eq:lp_reduced_gap}. The reason is that $V^{\pi}\left ( s\right )$ needs to be enlarged so that picking a uCE is achievable with higher probability. Putting all the above together, we propose the following linear programming problem to find the desired $r_1$ and $r_2$, 
\par 
\begin{equation} \label{eq:lp23}
\begin{aligned}
\textrm{maximize:}  \quad
& \sum_{s}V^{\pi}\left ( s\right ) - \lambda \left ( y\left ( s \right ) - V^{\pi}\left ( s\right ) \right ) \\
& + \mbox{regularization terms} \\
\textrm{subject to:} \quad
&Q^{\pi}_1\left ( s, a  \right )+Q^{\pi}_2\left ( s, a  \right ) \leq y\left ( s \right ) \\
\quad
&V^{\pi}\left ( s\right ) \leq y\left ( s \right )\\
&\mbox{Constraint } \eqref{eq:ce_sg5},
\end{aligned}
\end{equation}
where $\lambda$ is a regularization parameter. Expressing $V^{\pi}_i$ and $Q^{\pi}_i\left ( s, a  \right )$ as functions of $r_i$ and reformulating those inequalities more compactly in matrix notation leads to 
\begin{equation} \label{eq:uce_lp}
\begin{aligned}
\textrm{maximize:}  \quad
& \mathbf{1}_{1\times N} \times \left [ \left ( 1+\lambda \right )\left ( I - \gamma G_{\pi} \right )^{-1}B_{\pi}\left ( r_1+r_2 \right )-\lambda y \right ] \\
& + \mbox{regularization terms} \\
\textrm{subject to:} \quad
&\pi^T H\left ( s,a_i \right )^T\left [ H\left ( s,a_i \right )-H\left ( s,\check{a}_i \right ) \right ] D_{{\pi}}r_i \geq 0, \\
& i=1,2, \forall a_i \in \mathcal{A}_i, \check{a}_i \in \mathcal{A}_i \setminus a_i \\
&D_{\pi}\left ( r_1+r_2 \right ) \leq y \cdot \mathbf{1}_{NM^2}  \\
\quad
&\left ( I - \gamma G_{\pi} \right )^{-1}B_{\pi}\left ( r_1+r_2 \right ) \leq y\cdot \mathbf{1}_{N}.
\end{aligned}
\end{equation}
\par
We now discuss the regularization terms in (\ref{eq:uce_lp}). One challenging issue for MIRL is that there often exists many solutions that are equally sensible so that it is more likely than IRL to recover rewards which are far from actual ones. For example, \citeA{Lin2018} emphasize the importance of the structure of rewards. Therefore, some prior knowledge or assumption of the game, as well as the structure of the unknown rewards, is very helpful. For example, it is often assumed that, all other things being equal, an unknown reward vector is sparse \cite{Ng2000}. One easy way to incorporate this assumption is to add a penalty term to the objective function to regularize non-sparsity, which is $-\beta\left ( \left \| r_A \right \|_1 + \left \| r_B \right \|_1 \right )$, where $\beta > 0$ and $\left \| \cdot  \right \|_1$ denotes the $L_1$ norm. There might be other problem-specific knowledge/assumption regarding to reward available and taking advantage of it by incorporating it in the regularization terms will help infer higher-quality rewards.
\subsection{uNE-MIRL}
Recall that the necessary and sufficient condition for an observed bi-policy $\pi$ being a NE for a two-player general-sum stochastic game is given by 
\begin{equation}\label{eq:ne_constraint} 
\begin{aligned}
&\left ( B_{{\pi}|a_1}-B_{{\pi}} \right )D_{{\pi}}{r}_1\leq 0,\forall a_1 \in \mathcal{A}_1 \\
&\left ( B_{{\pi}|a_2}-B_{{\pi}} \right )D_{{\pi}}{r}_2 \leq 0,\forall a_2 \in\mathcal{A}_2. \\
\end{aligned}
\end{equation}
Since NE is a subset of CE, we can borrow the idea proposed in \cref{subsec:uce_mirl} and solve a uNE-MIRL problem by solving the following LP problem
\begin{equation} \label{eq:une_lp}
\begin{aligned}
\textrm{maximize:}  \quad
& \mathbf{1}_{1\times N} \times \left [ \left ( 1+\lambda \right )\left ( I - \gamma G_{\pi} \right )^{-1}B_{\pi}\left ( r_1+r_2 \right )-\lambda y \right ] \\
&+ \mbox{other problem-specific regularized terms}\\
\textrm{subject to:} \quad
&\left ( B_{{\pi}|a_1}-B_{{\pi}} \right )D_{{\pi}}{r}_1\leq 0,\forall a_1 \in \mathcal{A}_1 \\
&\left ( B_{{\pi}|a_2}-B_{{\pi}} \right )D_{{\pi}}{r}_2 \leq 0,\forall a_2 \in\mathcal{A}_2\\
&D_{\pi}\left ( r_1+r_2 \right ) \leq y \cdot \mathbf{1}_{M\times M}  \\
\quad
&\left ( I - \gamma G_{\pi} \right )^{-1}B_{\pi}\left ( r_1+r_2 \right ) \leq y.
\end{aligned}
\end{equation}

\section{Numerical Examples \Rmnum{1}: GridWorld} \label{sec:numerical_exp_1} \label{sec:experiments_general_sum}
This section describes the behaviour of our approaches (except advE-MIRL) using two grid games (GGs), shown in \cref{fig:gridworld_all}, namely GG1 for the left and GG2 for the right. These games have been used extensively in many theory-oriented MRL works \cite{Hu1998,Littman2001,Greenwald2003}. In both GGs, there are two agents, A and B, and two goals (or homes). The two agents act simultaneously and can move only one step in any of the four compass directions. When adjacent to a wall, choosing a direction into a wall results in a no-op, where the agent remains in the current position. If both agents attempt to move into the same cell, a collision occurs and they are pushed back to their original positions immediately, except for cells in the bottom row. Each agent is rewarded upon reaching its goal. However, since the reward is discounted with time, the earlier to reach the goal, the better. GG1 and GG2 are similar in basic game rules but different in board setup in two aspects. First, in GG1, the two players' goals are separate while their goals coincide in GG2. Second, in GG2, there are two barriers and if any agent attempts to move downward through the barrier from the top, then with $1/2$ probability this move fails and results in a no-op.  
\begin{figure}[ht]
  \centering
  \includegraphics[width=3.5in]{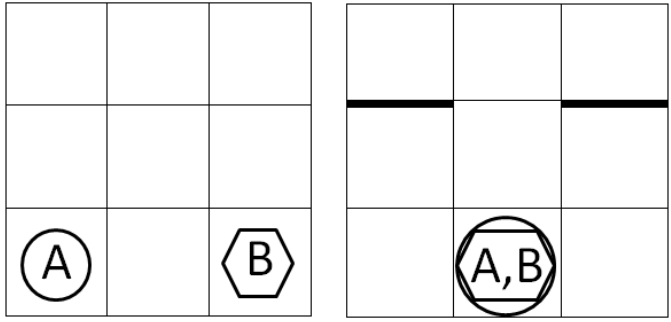}\\
  \caption{Grid games. The circle indicates A's goal and the hexagon indicates B's goal.}\label{fig:gridworld_all}
\end{figure}
We let agents A and B play the \emph{go-back-home} games together according to uCS, uCE, uNE or cooE. Our task is to recover their rewards given the equilibrium, the bi-policy, and the state transition dynamics. The \emph{basic} rewarding rule is: either player receives reward 1 (discounted with time) once reaching home and the game stops immediately, and 0 otherwise. When employing cooE, however, neither player receives reward unless they reach home \emph{simultaneously}.

Our experiments are conducted as follows. First, we apply the cooE-MRL approach proposed by \citeA{Hu2003}, and the uCE-MRL approach proposed by \citeA{Greenwald2003} to obtain cooE and uCE bi-policies, respectively. Then we develop similar Q-learning based iterative algorithms for uCS-MRL and uNE-MRL. The general procedure, namely multi-Q-learning algorithm, is the same for all these four MRL approaches and described in \cref{alg:mrl}. It is worth emphasizing that the multi-Q-learning algorithm can be applied to many variants of Q-learning problems as long as the equilibria exists and is unique \cite{Hu1998,Littman2001,Greenwald2003}. It is easy to show that uCS, uCE, uNE and cooE all meet this requirement.

The second step is to apply our uCS-MIRL, cooE-MIRL uCE-MIRL and uNE-MIRL approaches accordingly, incorporating  basic knowledge and some reasonable assumptions into  Gaussian priors for uCS and cooE. For example, one assumption is that both players' reward vectors are sparse, only depending on reaching home or not. In addition, one agent's position might have a small effect on the other agent's reward, or possibly no affect.

For each experiment, we compare recovered rewards of both players $r^{ \mbox{rec}}_A$ and $r^{ \mbox{rec}}_B$, with the true values $r_A$ and $r_B$ numerically. We use a \emph{normalized root mean squared error} (NRMSE) metric, where we first normalize a recovered reward vector $r^{ \mbox{rec}}$ on $\left [ 0, 1 \right ]$, as follows:
\begin{equation*}
r^{ \mbox{nrec}} = \frac{r^{ \mbox{rec}} - \min(r^{ \mbox{rec}})}{ \max(r^{ \mbox{rec}}) - \min(r^{ \mbox{rec}})},
\end{equation*}
and then compute
\begin{equation*} 
\mbox{NRMSE} = \frac{1}{2}\left ( \sqrt{\frac{\left \| r^{ \mbox{nrec}}_A - r_A \right \|_2}{\dim(r_A)}} + \sqrt{\frac{\left \| r^{ \mbox{nrec}}_B - r_B \right \|_2}{\dim(r_B)}}  \right )
\end{equation*} 

We compare our MIRL approach with IRL and d-MIRL approaches. First, we use an IRL approach to solve the uCS-, cooE-, uCE- and uNE-MIRL problems. Specifically, we focus on B, and try to infer its reward. Obviously, the inferred IRL reward is a function of the state and B's own action. The IRL approach we use is BIRL, proposed by \citeA{Qiao2011}. Note that the reward vector recovered from IRL can be extended to a MIRL reward vector by letting $R\left ( s, a_1, a_2 \right ) = R\left ( s, a_2\right )$ for all $a_1 \in \mathcal{A}_1$. Second, we use the d-MIRL approach to solve the above four problems. Recall that d-MIRL simply assumes agents employing a Nash equilibrium. 

To further evaluate the quality of uCS-MIRL reward, let agents $B_1$, $B_2$ and $B_3$ learn uCS-MIRL, IRL and d-MIRL rewards, respectively, and figure out their own policies $\pi_{B_1}$, $\pi_{B_2}$ and $\pi_{B_3}$. Then let these three agents play with A by using their policies while A still employs $\pi_A$ as if it plays with B. We compute their total game value over all states and compare with the true total game values, which are the maximum. For any reward, a larger the total game value indicates a better reward.  

Numerical results are summarized in \cref{tab:naed_r_general_sum} and \cref{tab:naed_v_general_sum}. \cref{tab:naed_v_general_sum} summarizes how close the total game value is to the true total game value, using the recovered reward from each method, by computing $\frac{1}{N}\sum_{s=1}^{N}\left ( V_{\text{true}}\left ( s \right )-V_{\text{recovered}}\left ( s \right ) \right )^2$, where $N$ is the number of states, $V_{\text{true}}$ and $V_{\text{recovered}}$ are true total game value vector and the one using recovered reward, respectively.
 
We can easily conclude that our MIRL approaches generate satisfactory results and performs much better than IRL and d-MIRL approaches for all the four problems.

\begin{algorithm}[tbp]
\caption{General Multi-Q-learning algorithm}\label{alg:mrl}
\begin{algorithmic}[1]
\Require $\mbox{$f$: uCS, cooE, uCE or uNE; $\alpha$: learning rate}$
\Procedure{Multi-Q}{$f,T, r_1, r_2, \alpha$}
\State \textbf{Initialize}: $s, a , Q_1, Q_2, t=0$
\While{$t < T$}
\State $\mbox{agents choose bi-strategy }a\mbox{ in state }s $
\State $\mbox{observe rewards and next state }s' $
\State \For{$i=1 \to N$}
\State $V_i\left ( s' \right )=f_i\left ( Q_1\left ( s' \right ),Q_2\left ( s' \right ) \right )$
\State $Q_i\left ( s, a \right )=\left ( 1-\alpha \right )Q_i\left ( s, a \right )+\alpha\left [ \left ( 1-\gamma \right )r_i\left ( s, a \right )+\gamma V_i\left ( s'\right )\right ]$
\State \EndFor
\State $\mbox{agents choose bi-strategy }\vec{a'} $
\State $s=s', a=a' $
\State $\mbox{decay }\alpha$
\State $t=t+1$
\EndWhile
\EndProcedure
\end{algorithmic}
\end{algorithm}
\par
%\begin{table}[t]
%  \centering
%  %\large
%  \begin{tabular}{c c c c c c c}
%    \toprule
%     & uCS-MIRL & IRL (uCS) & cooE-MIRL & uCE-MIRL & uNE-MIRL \\
%    \midrule
%    GG1 & $2.35 \times 10^{-6}$ &$1.48 \times 10^{-2}$ &$6.73 \times 10^{-4}$& $1.72\times 10^{-6}$ & $0$ \\
%    GG2 & $5.15 \times 10^{-8}$ &$1.46 \times 10^{-2}$ &$6.77 \times 10^{-4}$& $1.93\times 10^{-20}$ & $0$ \\
%    \bottomrule
%  \end{tabular}
%  \caption{NAED results for reward values comparison}
%  \label{tab:numerical_results}
%\end{table}

\begin{table}[ht]
  \centering
  %\large
  \begin{tabular}{c c c}
    \toprule
     & Grid Game \#1 & Grid Game \#2  \\
    \midrule
    uCS-MIRL & $1.50 \times 10^{-3}$ & $2.27 \times 10^{-4}$ \\
    IRL & $0.061$ & $0.060$ \\
    dMIRL & $0.104$ & $0.106$ \\
    \bottomrule
    cooE-MIRL & $0.026$ & $0.026$ \\
    IRL & $0.409$ & $0.319$ \\
    dMIRL & $0.085$ & $0.083$ \\
    \bottomrule
    uCE-MIRL & $1.30 \times 10^{-3}$ & $1.39\times 10^{-10}$ \\
    IRL & $0.287$ & $0.311$ \\
    dMIRL & $0.089$ & $0.083$ \\
    \bottomrule
    uNE-MIRL & $0$ & $0$ \\
    IRL & $0.271$ & $0.283$ \\
    dMIRL & $0.104$ & $0.103$ \\
    \bottomrule
  \end{tabular}
  \caption{NRMSE results for reward values comparison}
  \label{tab:naed_r_general_sum}
\end{table}

\begin{table}[ht]
  \centering
  %\large
  \begin{tabular}{c c c}
    \toprule
     & Grid Game \#1 & Grid Game \#2  \\
    \midrule
    uCS-MIRL & $0.099$ & $2.50 \times 10^{-4}$ \\
    IRL & $0.223$ & $0.179$ \\
    dMIRL & $0.197$ & $0.202$ \\
    \bottomrule
  \end{tabular}
  \caption{total game value comparison for uCS}
  \label{tab:naed_v_general_sum}
\end{table}

\section{Numerical Examples \Rmnum{2}: Abstract Soccer Game} \label{sec:soccer}
This section presents a demonstration of the advE-MIRL approach on a stylized soccer game. Two-player soccer games of many forms are popular among MRL researchers for algorithm demonstration and comparison purposes \cite{Greenwald2003,Lin2018,Littman1994}. \citeA{Lin2018} propose a zero-sum MIRL approach is proposed and demonstrate the performance on a game that is similar to the one used here. However, their approach requires that the game be zero-sum. In this section, we relax the zero-sum assumption and require only that the two players be foes, which enables us to rely on a weaker assumption that they employ an advE.

The soccer game (see \cref{soccer_grid_45}) is depicted as follows. Players A and B compete with each other, aiming to score by either bringing or kicking the ball (represented by a circle) into their opponents' goals (A's goal are 6 and 11, and B's goal are 10 and 15). Both players can move simultaneously either in four compass directions ending in a neighbouring cell or remain in their current cell. A ball exchange may occur with some probability in case of a collision in the same cell. A \emph{kick} action is also available to players. Each one has a perception of how likely they are to score on a shot, or the \emph{probability of a successful shot} (PSS), if kicking the ball at a given position. For simplicity, PSS is assumed to be independent of the opponent's position. The position based PSS distribution is shown in \cref{table:original_pass}.
\begin{figure}[ht]
  \centering
  % Requires \usepackage{graphicx}
  \includegraphics[width=3.5in]{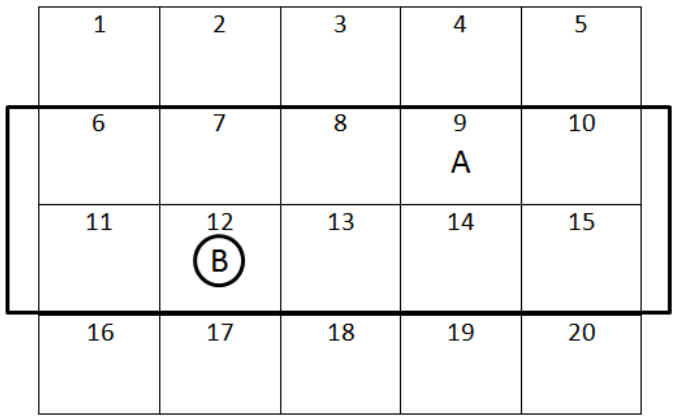}\\
  \caption{Soccer game: initial board}\label{soccer_grid_45}
\end{figure} 
\begin{table}[ht]
\centering % used for centering table
\begin{tabular}{c c c c c c}  % centered columns (4 columns)\
\hline\hline %inserts double horizontal lines
 & PSS = 0.7 & PSS = 0.5 & PSS = 0.3 & PSS = 0.1 & PSS = 0 \\  % inserts table
%heading
A & 1, 7, 12, 16 & 2, 8, 13, 17 & 3, 9, 14, 18 & 4, 10, 15, 19 & 5, 20 \\ % inserting body of the table
\hline\hline %inserts double horizontal lines
 & PSS = 0.7 & PSS = 0.5 & PSS = 0.3 & PSS = 0.1 & PSS = 0 \\  % inserts
B & 5, 9, 14, 20 & 4, 8, 13, 19 & 3, 7, 12, 18 & 2, 6, 11, 17 & 1, 16 \\
\hline %inserts single line
\end{tabular}
\caption{Original PSS distribution of each player} % title of Table
\label{table:original_pass} % is used to refer this table in the text
\end{table}

Note that a player's PSS at a particular spot is their \emph{perceived} likelihood of a scoring shot, rather than the \emph{actual} probability of a successful shot. So statistically calculating the successful shot rate from observation data would not help reflect the player's own beliefs about their shooting skills, which is the reward.      

The two players play against each other, employing a minimax equilibrium in a zero-sum game. But this information is \emph{not} available to us when solving the MIRL problem. Instead, we are given: (1) the bi-policy of the two players over all states, and; (2) the state transition dynamics (including the ball exchange rate $\beta = 0.6$). In fact, this information can be statistically calculated or estimated with sufficient observations. We simply skip this data pre-processing stage as it is not the emphasis of this paper. We then assume that the two players follow an advE and try to infer their rewards on this basis.

\subsection{Prior Specification} \label{subsec:priors}

The numerical experiments are performed using six different prior distributions.  Multivariate Gaussian distributions are used as the prior on the reward.  We test three settings for the mean of the prior distribution and two settings for the covariance.  The specifics for each parameter setting are outlined below.

\begin{itemize}
    \item \emph{Weak Mean (WM)}: The weak mean parameter setting assumes the least amount of prior knowledge about the game.  The mean parameter of the Gaussian prior is $0.5$ in each state where A has possession of the ball and $-0.5$ in each state where A does not have the ball.  The prior distribution for B is the same, i.e. the mean parameter is $0.5$ in each state where B has possession of the ball and $-0.5$ in each state where B does not have the ball.
    \item \emph{Medium Mean (MM)}: The medium mean parameter setting has more prior information about the game than the weak mean parameter setting. Specifically, if A possesses the ball and is in the row in front of the goal (1, 6, 11, and 16), the mean of the Gaussian prior is $1$ for A and $-1$ for B. Symmetrically, the mean parameter is $1$ for B and $-1$ for A if B possesses the ball and is in the row in front of the goal (5, 10, 15, and 20). In addition, whenever a player takes a shot, the mean of the prior is $0.5$ in all board positions.  Whenever the opposing player takes a shot, the mean of the prior is $-0.5$ in all board positions. The mean of the prior is $0$ for all other states and actions.
    \item \emph{Strong Mean (SM)}: The strong mean parameter setting assumes the most prior information about the game.  The strong mean parameter setting is similar to the medium mean but the parameter is 1 only when the player has has the ball and is directly in front of the goal (6 and 11 for A and 10 and 15 for B).
    \item \emph{Weak Covariance (WC)}:  The weak covariance parameter setting assumes an identity matrix for all states and actions.
    \item \emph{Strong Covariance (SC)}:  The strong covariance paramater assumes the most prior information about the game.  This covariance is built based upon the following two assumptions about action and player position:
    \begin{enumerate}
    \item when one player has the ball and takes a shot, its PSS depends only on their current position in the field. For example, let $s_1$ and $s_2$ be states where Player A is in the same position. The correlation for these states and Player A's action of shooting the ball is $\rho\left ( r_A\left ( s_1, a_A=\mbox{shoot}, a_B \right ), r_A\left ( s_2, a'_A=\mbox{shoot}, a'_B \right ) \right ) = 1$.  
    \item when one player has the ball and takes any action other than a shot (a movement or choosing to stay in the same position), their reward is state dependent, i.e. $\rho\left ( r_A\left ( s_1, a_A \ne\text{shoot}, a_B \right ), r_A\left ( s_1, a'_A\ne \text{shoot}, a'_B \right ) \right ) = 1$. 
    \end{enumerate}
\end{itemize}

\subsection{Monte Carlo Simulation using Recovered Rewards}\label{sec:monte_carlo}
This section evaluates the advE-MIRL approach. By solving an advE-MIRL problem \eqref{eq:advE_complex_program}, we recover A and B's reward vectors, over all states and all actions. There are 6 advE-MIRL reward vectors recovered corresponding to the 6 pairs of means and covariance matrices outlined in \ref{subsec:priors}. 

The zero-sum MIRL approach by \citeA{Lin2018} is validated using the abstract soccer game and measure the quality of the 6 types of zero-sum MIRL reward in two ways: numerical (Section VI, Fig.2-7) and Monte Carlo simulation, both against the true reward. They also conduct a thorough sensitivity analysis with respect to the prior (Section VII, Fig.10). Their conclusion is that though there is a trend that the more the reward numerically deviates from the true value the worse it performs in Monte Carlo simulation, measuring the deviation from the true reward is not an accurate representation of the quality of the estimated reward. Hence, simply measuring the numerical difference from the true value may lead to misleading conclusions. In addition, given the same games settings and the same prior, advE-MIRL reward is, in theory, supposed to perform no better than zero-sum MIRL due to the stronger zero-sum MIRL restriction. Taking all the above into account, we adopt a Monte Carlo simulation approach, as follows:
\begin{enumerate}
\item Create agents: 
\begin{itemize}
\item \emph{C}, which uses advE-MIRL reward; 
\item \emph{D}, which uses true reward;
\item \emph{E}, which uses zero-sum MIRL reward;
\item \emph{F}, which uses dMIRL reward;
\item \emph{G}, which uses BIRL reward.
\end{itemize}
\item Simulate competitive games:
\begin{itemize}
\item C against D;
\item C against E;
\item C against F;
\item C against G:
\end{itemize}
\end{enumerate}
Note that agents E, F and G use rewards recovered from three conventional MIRL approaches covered in \cref{sec:current_mirl}. Here we let C plays the role of A and others take the place of B (due to symmetry, two parties can switch roles as well). All those games are simulated in three different environmental settings, where the the ball exchange rates $\beta$ are $0$, $0.4$ and $1$. 5000 round games are simulated per case.  
\par
The simulation results are presented in Tables \ref{table:C_vs_D}--\ref{table:C_vs_G}, where \emph{WM}, \emph{MM}, \emph{SM}, \emph{WC} and \emph{SC} stand for \emph{weak mean}, \emph{medium mean}, \emph{strong mean}, \emph{weak covariance matrix} and \emph{strong covariance matrix}, respectively. To interpret the result, take the 2nd row of \cref{table:C_vs_E} as an example: C uses WM and SC as the prior and recovers A's advE-MIRL reward, while E uses the same prior and learns a zero-sum MIRL reward vector of B. They come up with their own minimax policies according to their learned rewards and environmental settings and play against each other. 32.28/36.40 means C beats E with probability 32.28\%, loses with probability 36.40\%, and end in a tie with probability 31.32\%, when the ball exchange rate is 1. Note that 0/0 shown in these tables indicates that both parties learned rewards such that neither is able to score a single point even if its opponent is also poorly skilled. 
\begin{table}[ht]
\centering % used for centering table
\begin{tabular}{l c c c} % centered columns (4 columns)
\hline\hline %inserts double horizontal lines
advE-MIRL Rewards &  W/L\% ($\beta = 0.4$) &  W/L\% ($\beta = 1$) &  W/L\% ($\beta = 0$)\\ [0.5ex] % inserts table
%heading
\hline % inserts single horizontal line
%True & 49.95& 50.08 & 49.43\\ % inserting body of the table
%Random & 22.78 & 19.22 & 22.91\\ % inserting body of the table
WM \& WC & 0/32.44 & 0/58.00 & 0/49.98\\ % inserting body of the table
WM \& SC & 20.40/25.46 & 20.50/38.24 & 42.88/50.16\\ % inserting body of the table
MM \& WC & 4.60/30.12 & 9.36/44.00 & 10.44/49.88\\ % inserting body of the table
MM \& SC & 24.86/24.94 & 25.10/24.80 & 49.97/50.02\\ % inserting body of the table
SM \& WC & 14.90/30.52 & 6.80/42.50 & 15.42/50.08\\ % inserting body of the table
SM \& SC & 25.26/24.80 & 25.00/24.80 & 50.14/49.86\\ % inserting body of the table
\hline %inserts single line
\end{tabular}
\caption{C vs D} % title of Table
\label{table:C_vs_D} % is used to refer this table in the text
\end{table}
\begin{table}[ht]
\centering % used for centering table
\begin{tabular}{l c c c} % centered columns (4 columns)
\hline\hline %inserts double horizontal lines
advE-MIRL Rewards &  W/L\% ($\beta = 0.4$) &  W/L\% ($\beta = 1$) &  W/L\% ($\beta = 0$)\\ [0.5ex] % inserts table
%heading
\hline % inserts single horizontal line
%True & 49.95& 50.08 & 49.43\\ % inserting body of the table
%Random & 22.78 & 19.22 & 22.91\\ % inserting body of the table
WM \& WC & 0/2.40 & 0/0 & 0/0\\ % inserting body of the table
WM \& SC & 22.76/28.94 & 32.28/36.40 & 43.14/50.14\\ % inserting body of the table
MM \& WC & 0/0 & 9.20/5.60 & 4.12/16.86\\ % inserting body of the table
MM \& SC & 24.86/25.12 & 25.04/24.96 & 49.54/50.44 \\ % inserting body of the table
SM \& WC & 11.24/10.60 & 8.80/9.18 & 16.10/24.46\\ % inserting body of the table
SM \& SC & 25.28/25.06 & 24.94/25.12 & 50.13/49.86\\ % inserting body of the table
\hline %inserts single line
\end{tabular}
\caption{C vs E} % title of Table
\label{table:C_vs_E} % is used to refer this table in the text
\end{table}
\begin{table}[ht]
\centering % used for centering table
\begin{tabular}{l c c c} % centered columns (4 columns)
\hline\hline %inserts double horizontal lines
advE-MIRL Rewards &  W/L\% ($\beta = 0.4$) &  W/L\% ($\beta = 1$) &  W/L\% ($\beta = 0$)\\ [0.5ex] % inserts table
%heading
\hline % inserts single horizontal line
%True & 49.95& 50.08 & 49.43\\ % inserting body of the table
%Random & 22.78 & 19.22 & 22.91\\ % inserting body of the table
WM \& WC & 0/0 & 0/0 & 0/0\\ % inserting body of the table
WM \& SC & 27.10/0 & 25.42/0 & 50.04/0\\ % inserting body of the table
MM \& WC & 6.04/0 & 8.64/0 & 18.02/0\\ % inserting body of the table
MM \& SC & 25.28/0 & 26.06/0 & 49.86/0 \\ % inserting body of the table
SM \& WC & 13.98/0 & 9.00/0 & 49.26/0\\ % inserting body of the table
SM \& SC & 24.90/0 & 26.08/0 & 49.90/0\\ % inserting body of the table
\hline %inserts single line
\end{tabular}
\caption{C vs F} % title of Table
\label{table:C_vs_F} % is used to refer this table in the text
\end{table}
\begin{table}[ht]
\centering % used for centering table
\begin{tabular}{l c c c} % centered columns (4 columns)
\hline\hline %inserts double horizontal lines
advE-MIRL Rewards &  W/L\% ($\beta = 0.4$) &  W/L\% ($\beta = 1$) &  W/L\% ($\beta = 0$)\\ [0.5ex] % inserts table
%heading
\hline % inserts single horizontal line
%True & 49.95& 50.08 & 49.43\\ % inserting body of the table
%Random & 22.78 & 19.22 & 22.91\\ % inserting body of the table
WM \& WC & 0/0 & 0/0 & 0/0\\ % inserting body of the table
WM \& SC & 25.10/0 & 24.84/0 & 50.12/0\\ % inserting body of the table
MM \& WC & 5.52/0 & 8.76/0 & 16.20/0\\ % inserting body of the table
MM \& SC & 28.50/10.12 & 25.12/12.00 & 49.26/20.46 \\ % inserting body of the table
SM \& WC & 14.20/0 & 8.64/0 & 44.25/0\\ % inserting body of the table
SM \& SC & 25.80/0 & 25.28/0 & 50.12/0\\ % inserting body of the table
\hline %inserts single line
\end{tabular}
\caption{C vs G} % title of Table
\label{table:C_vs_G} % is used to refer this table in the text
\end{table}

The results in Tables \ref{table:C_vs_D}--\ref{table:C_vs_G} support the following conclusions:
\begin{enumerate}
\item The advE-MIRL approach performs, if not better, comparatively with zero-sum MIRL approach given same priors. Considering that the zero-sum MIRL has a stronger constraint, our advE-MIRL approach's performance has reached its upper limit.
\item The advE-MIRL approach performs notably better than d-MIRL and BIRL approaches.
\item As for the sensitivity with respect to prior, overall, the stronger the covariance matrix or the mean is when selecting a prior, the better the solution is. In particular, the covariance matrix has a greater influence than mean in recovering a reasonable reward.
\end{enumerate}

%We need to clarify that the first rows of both tables do not imply that the win rate of B against C or D is exactly $50$\%. In fact, both rewards learned from WM \& WC and learned from d-MIRL in our simulation experiment perform so poorly that neither party is able to reach the goal even if there is no competitor at all. In other words, neither of them ever ``wins".
%\begin{figure}[htbp]
%\centering
%\includegraphics[width=4.5in]{two_metrics_comparison.eps}
%\caption{Two metrics comparison}
%\label{two_metrics_comparison}
%\end{figure}
%\begin{figure}[htbp]
%\centering
%\includegraphics[width=3.5in]{Figures/Chapter4/fancy_two_metrics_comparison_revised.eps}
%\caption{Two evaluation metrics comparison}
%\label{two_metrics_comparison}
%\end{figure}
\par

\section{Numerical Examples \Rmnum{3}: Incomplete Policy} \label{sec:incomplete_policy}
All the approaches developed in this paper rely on a strong assumption that a complete bi-policy over all states is available, which is hardly possible in practice. Therefore, it is interesting to assess how much is lost in recovered reward when using our approaches if one player's (or more) policy is not observed on all states. One way is to conduct imputation to the policy missing states. In this section, we will do a simple computational experiment in GG1 on uNE-MIRL. It is worth emphasizing that this section does not attempt to address incomplete observations issue.

Note that in GG1, there are 72 states in total. The experiment is conducted as follows. First, we randomly pick $k$ states as if the bi-strategies of those states are unavailable. Second, in the complete bi-policy, replace the bi-strategies of those $k$ states with some pre-defined bi-strategies for imputation purpose. Finally, use the imputed bi-policy as an input to \eqref{eq:ne_constraint} and recover the rewards. The imputation scheme we use for those ``unavailable" states is uniform mix-strategy - each player picks an action randomly with equal probabilities over all actions. NRMSE is used as the evaluation metric as in \cref{sec:numerical_exp_1}.

The results are summarized in \cref{tab:imputed_policy_results}. We can see that the more missing states there are, the less accurate the result is, which is in line with our expectation.
\begin{table}[ht]
\centering
\begin{tabular}{|c|c|}
\hline
\# of missing states & Result of imputed bi-policy    \\ \hline
0                    & 0             \\ \hline
1                    & 0.0521        \\ \hline
5                    & 0.0614        \\ \hline
10                   & 0.1155        \\ \hline
15                   & 0.1168        \\ \hline
\end{tabular}
\caption{NRMSE results for uNE-MIRL reward in GG1 with imputed bi-policy}
\label{tab:imputed_policy_results}
\end{table}

Addressing incomplete/partial/noisy observations is a vital topic in IRL/MIRL as it helps bridge the gap between theory and practice. Some representative work on that subject has been conducted by \citeA{Choi2011} and \citeA{Shahryari2017}. Further investigation into this topic with respect to MIRL is beyond the scope of this paper.

\section{Conclusions and Future Work} \label{sec:general_sum_conclusion}

This paper has introduced solution approaches for five general-sum MIRL problems, each distinguished by its assumption about the equilibrium being played by the observed agents. Each solution approach was demonstrated on benchmark grid-world examples. 

The advE-MIRL problem formulation requires weaker assumptions but performs comparably with zero-sum MIRL. Additionally, advE-MIRL outperforms both d-MIRL and BIRL. Both the uCS- and cooE-MIRL approaches generate good results if the estimated reward can be scaled correctly. The uCE- and uNE-MIRL approaches perform remarkably well in two benchmark grid-world examples, accurately estimating the value of the true reward. We offer three possible explanations for the high quality of the empirical results.  First, these two small GGs are well-defined in the sense that there is no chance of moving in another direction by accident once a certain direction is selected (no noise in action). Second, the bi-policy $\pi$ we use is exactly the equilibrium of interest because it is generated from a corresponding MRL-Q-learning algorithm. Third, we have incorporated strong prior information about the game, and a good solution can be achieved by tuning the regularized coefficients.     
\par
Although this paper is restricted to the two-person case, an extension of the methods to $n$-player cases would be straightforward because the equilibria we study are unique in $n$-player games, if they exist. In this sense, advE-MIRL has advantages over zero-sum MIRL as how to extend zero-sum MIRL from two-player to $n$-player is not yet clear. 

Our work is limited in at least two aspects. First, we consider only the case where both state and action spaces are discrete and limited. Second, though it is not explicitly emphasized, we use a strong assumption that a full stationary bi-policy over all states is available. In practice, it might not be possible to observe the play of the game long enough to obtain an accurate estimate of the true bi-policy over all states. Two potential directions for future work are worth pursuing. One is to derive continuous versions of the proposed approaches. The other is to treat the condition when only a partial bi-policy is available. Ideally, future methods would have the characteristic that the more complete the bi-policy observations, the more robust the recovered rewards.

Our solution approaches center on the formulation of optimization problems that can be solved in polynomial time in the size of the state and action spaces. Like MDPs, however, the sizes of stochastic games tend to scale very poorly as one moves away from toy examples, and the development of a large-scale MIRL method remains an open problem.

\vskip 0.2in
\bibliography{MIRL_database_init_ieee}
\bibliographystyle{theapa}

\end{document}